\newtheorem{definition}{Definition}
\newtheorem{lemma}{Lemma}
\newtheorem{theorem}{Theorem}
\begin{document}

\vspace*{1em}

\begin{center}
{\Large\bf 
T.W.H.\ Neuteboom
} 

\vspace{2em}
 
{\LARGE\bf 
Modifying Squint for
\\
Prediction with Expert Advice
\\ \vspace*{.25em}
in a Changing Environment
} 

 \vspace{.25em} 

\vspace{10em} 

{\large\bf 
Bachelor Thesis
} 

\vspace{1em}

{\large\bf 
16 June 2020
}

\vspace{10em} 

{\large\bf
\begin{tabular}{ll}
Thesis Supervisor: & Dr.\ T.A.L.\ van Erven
\end{tabular}
}

\vfill

\includegraphics{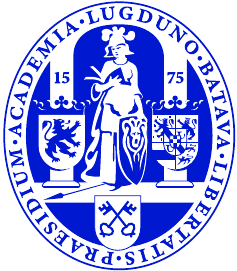}\\

\vspace{2em}

{\large\bf 
Leiden University\\
Mathematical Institute\\
}
\end{center}

\pagenumbering{gobble}

\newpage
\pagenumbering{arabic}
\tableofcontents

\newpage
\section{Introduction}
Online learning is an important domain in machine learning which is concerned with a learner predicting a sequence of outcomes and improving the predictions as time continues by learning from previous outcomes. Every round a prediction is made after which the outcome is revealed. Using the knowledge about this outcome the strategy for predicting the outcome in the next round can be adjusted. The learner wants to maximize the amount of good predictions made \cite{Backgr1}. \par
In a specific case of online learning the learner receives advice from $K$ so-called experts. It maintains a weight vector which is a probability distribution over the experts. Every round the learner randomly picks an expert using this distribution and follows its advice. When the outcome is revealed, the learner knows which experts were wrong and the weight vector will be adjusted accordingly to improve its chances of making a good prediction in the next round \cite{Backgr2}. This also is the form of online learning which we will study in this thesis. \par
The main question in this field of research is: what is the best way to adjust the weight vector? Multiple algorithms have been designed to determine the weight vector, but we will specifically look at the Squint algorithm \cite{Squint}. This algorithm is designed to always function at least as well as other known algorithms and in specific cases it functions much better. Squint was made for a non-changing environment. This means that it was designed for a setting in which the probability of a certain expert making a good prediction will not change over time. This thesis is concerned with making Squint function well in a changing environment, i.e.\ where experts can start performing better or worse at some point.\par
In Chapter \ref{sec:2} we will introduce the mathematical setting, present two algorithms, including Squint, for the non-changing environment and compare their properties. Then in Chapter \ref{sec:3} we will analyze how algorithms for the non-changing environment are usually made suitable for the changing environment. However, this conventional method makes the desired properties Squint has in a non-changing environment vanish. In order to find a way to retain Squint's properties, we will first dive into the design of the Squint algorithm and the proof of its properties in Chapter \ref{sec:4}. Finally, in Chapter \ref{sec:5} we will make our contribution by combining all the gathered information, designing a method which makes Squint function well in a changing environment and making sure its properties are preserved. We summarize our results in the concluding Chapter \ref{sec:6}.

\newpage
\section{Prediction with Expert Advice}\label{sec:2}
In this thesis we will study the Squint algorithm which is used for a specific case of online learning \cite{Backgr1}. We will now introduce the setting and make some definitions as used in the article of the Squint algorithm \cite{Squint}.

\subsection{Setting}\label{subsec:2.1}
Each round $t=1,2,\ldots,T$ a learner wants to predict an outcome $y_t$, which is determined by the environment. The loss of a prediction $p_t$ is denoted by $l(p_t,y_t) \in [0,1]$ and indicates how good the prediction was, where a low loss resembles a good prediction and a high loss resembles a bad one. For example, $y_t\in\{0,1\}$ can indicate whether it rains or not on day $t$. The learner predicts a probability $p_t\in[0,1]$ of it raining that day. The loss then can be defined as $l(p_t,y_t)=|p_t-y_t|$.\par
Each round the learner obtains advice from $K$ experts. For each expert $k\in\{1,2,\dots,K\}$ this advice is in the form of a prediction $p_t^k$ of the outcome $y_t$. However, their predictions are not necessarily right. Hence, every expert $k$ suffers a loss $l_t^k := l(p_t^k, y_t)$. Based on the experts' losses in previous rounds the learner will decide which expert's advice to adopt. The learner's decisions are randomized. Thus, they are made using a probability vector $\mathbf{w}_t$ (the components are non-negative and add up to 1) on the experts, which can be adjusted every round by the learner.\par
We now define $\mathbf{l}_t:=(l_t^1,l_t^2,\dots,l_t^K)^{\top}\in[0,1]^K$ as the losses of the $K$ experts in round $t$. Then the dot product $\mathbf{w}_t^{\top}\mathbf{l}_t$ resembles the expected loss of the learner. Since a low loss induces a good performance, we define the learner's performance compared to expert $k$ by $r_t^k := \mathbf{w}_t^{\top}\mathbf{l}_t-l_t^k$. This value resembles how much the learner is expected to regret using his probability distribution on the experts instead of deterministically picking expert $k$ and hence is called the instantaneous regret compared to expert $k$. Finally, this leads us to defining the total regret by
\begin{equation} \label{eq:RTk}
R_T^k := \sum_{t=1}^Tr_t^k = \sum_{t=1}^T\big(\mathbf{w}_t^{\top}\mathbf{l}_t-l_t^k\big).
\end{equation}
The goal of the learner is to perform as well as the best expert or more specifically, the expert with the lowest accumulated loss. Hence, the goal is to have `small' regret $R_T^k$ simultaneously for all experts $k$ after any number of rounds $T$. The total number of rounds $T$ is known to the learner before it starts the learning task and thus can be used when determining $\mathbf{w}_t$.\par
One could question when we can call the regret `small'. We assume that every expert will always have the same probability of making a good prediction. This means that an expert's total loss will grow linearly over time. We want our total regret to grow slower such that the average instantaneous regret approaches zero as the amount of time steps increases. For this reason we call the regret `small' if it grows sublinearly in $T$. We will now look into an algorithm which guarantees this property for the regret.

\subsection{Hedge Algorithm}
The question we aim to answer is: how should the learner adjust the probability vector $\mathbf{w}_t$ each round in order to have `small' regret? One way to do this is described by Freund and Schapire in their so-called Hedge algorithm \cite{Hedge}. This algorithm depends on a parameter $\eta \in [0,\infty)$, which can be chosen by the user. We define $w_t^k$ to be the $k$'th component of the probability vector $\mathbf{w}_t$, which in essence equals the weight put on expert $k$. Then the Hedge algorithm determines the probability vector for the next round by, for each $k$, setting
\[w_{t+1}^k = \frac{\pi(k) e^{-\eta\sum_{s=1}^tl_s^k}}{\sum_{i=1}^K \pi(i) e^{-\eta\sum_{s=1}^tl_s^i}}.\]
Here $\pi(k)$ is the prior distribution on the experts. So $\pi(k)$ equals the weight put on expert $k$ before the algorithm starts. When we set $\pi(k)$ to be the uniform distribution, so $\pi(k) = \frac{1}{K}$ for all $k$, and we set $\eta = \sqrt{\frac{8}{T}\ln{K}}$, Freund and Schapire show that the algorithm bounds the total regret by
\[R_T^k \leq \sqrt{\frac{T}{2}\ln{K}}\]
for each total number of rounds $T$ and every expert $k$. Other algorithms also obtain explicit bounds like we have here for Hedge. However, those bounds look more complex. Hence, from now on we use the following definition:
\begin{definition}
\label{def:curlybound}
Let $D \subseteq \mathbb{R}^n$ be the domain of the functions $f:D \rightarrow \mathbb{R}$ and $g:D \rightarrow \mathbb{R}$. We write $f(x) \preccurlyeq g(x)$ if there exists an absolute constant $c\in\mathbb{R}_{>0}$ such that for all $x\in D$ we have $f(x) \leq cg(x)$.
\end{definition}\noindent
With this definition we can write a cleaner expression for the bound obtained by the Hedge algorithm:
\begin{equation}\label{eq:Hedgebound}
R_T^k \preccurlyeq \sqrt{T\ln{K}}
\end{equation}
Thus, the Hedge algorithm guarantees that the total regret grows sublinearly over time, which is exactly what we wished for. So using this algorithm we obtain `small' regret. But there exist algorithms which yield even smaller regret, like is shown in the next section.

\subsection{Squint Algorithm}\label{subsec:2.3}
More recently, Koolen and Van Erven have come up with another algorithm, named Squint \cite{Squint}. This algorithm makes use of $V_T^k := \sum_{t=1}^T(r_t^k)^2$, the cumulative uncentered variance of the instantaneous regrets. Furthermore, it involves a learning rate $\eta \in [0,\frac{1}{2}]$. However, the optimal learning rate is not known to the learner. So it uses a so-called prior distribution $\gamma(\eta)$ on the learning rate. Finally, the prior distribution on the experts is again denoted by $\pi(k)$. Using these distributions, Squint determines the probability vector for the next round by setting
\begin{equation}\label{eq:Squint}
\mathbf{w}_{t+1} = \frac{\mathbb{E}_{\gamma(\eta)\pi(k)}\big[e^{\eta R_t^k-\eta^2V_t^k}\eta \mathbf{e}_k\big]}{\mathbb{E}_{\gamma(\eta)\pi(k)}\big[e^{\eta R_t^k-\eta^2V_t^k}\eta\big]}
\end{equation}
where $\mathbf{e}_k$ is the unit vector in the $k$'th direction.\par
For different choices of $\gamma(\eta)$ they obtain bounds for $R_T^{\mathcal{K}} := \mathbb{E}_{\pi(k|\mathcal{K})}R_T^k$ with $\mathcal{K} \subseteq \{1,2,\dots,K\}$ a subset of the set of experts and $\pi(k|\mathcal{K})$ the prior $\pi(k)$ conditioned on $\mathcal{K}$. This is the expected total regret compared to experts in the subset $\mathcal{K}$. Likewise, they define $V_T^{\mathcal{K}} := \mathbb{E}_{\pi(k|\mathcal{K})}V_T^k$. For a certain choice of $\gamma(\eta)$, on which we will go into further detail in Chapter \ref{sec:4}, they obtain the bound
\begin{equation}\label{eq:SquintRTk}
R_T^{\mathcal{K}} \preccurlyeq \sqrt{V_T^{\mathcal{K}}\big(\ln(\ln{T})-\ln{\pi(\mathcal{K})}\big)} + \ln(\ln{T})-\ln{\pi(\mathcal{K})}
\end{equation}
for each subset of experts $\mathcal{K}$ and every number of rounds $T$. Here $\pi(\mathcal{K})$ is the weight put on the elements in the subset $\mathcal{K}$ by the prior distribution. Usually $\mathcal{K}$ is chosen to be the set, unknown to the algorithm, of the best experts, such that the bound tells us something about the expected regret compared to the best experts. If this regret is `small', then the regret compared to the worse experts definitely is small as well. However, we now do not have a guarantee of our regret compared to the very best expert, which is what we initially were after.\par
Hence, one might wonder what makes this bound better than the one (\ref{eq:Hedgebound}) of the Hedge algorithm. Instead of using the time in $\sqrt{T}$ the new bound involves the variance in $\sqrt{V_T^{\mathcal{K}}}$. Since the instantaneous regret is bounded by $r_t^k \in [-1,1]$, we conclude that $V_T^k = \sum_{t=1}^T(r_t^k)^2 \leq T$. Often the variance is strictly smaller than $T$, which makes it a stricter bound. On top of that, the factor $\ln(K)$ is replaced by $-\ln{\pi(\mathcal{K})}$. When the prior distribution $\pi(k)$ is chosen uniformly, this factor is again smaller. Lastly, the factor $\ln(\ln{T})$ practically behaves like a small constant and thus can be neglected.\par
In the worst case there are no multiple good experts and we have to choose $\mathcal{K}$ to be a single expert. When using the uniform prior, the term $-\ln\pi(\mathcal{K})$ equals $\ln K$. Also, in the worst case $V_T^{\mathcal{K}}$ will be close to $T$. Since $\ln(\ln T)$ is negligible, the Squint bound now matches the bound of Hedge in some sense. But again, this holds for the worst case. In practice there are often multiple good experts and the expected variance $V_T^{\mathcal{K}}$ is much smaller than $T$ for specific cases \cite{SquintUseful}. That makes this bound significantly better than the Hedge bound. Another advantage is that the bound holds for every prior distribution $\pi(k)$ on the experts and not only for the uniform prior. For non-uniform priors it could also yield favorable bounds.\par
In Section \ref{subsec:2.1} we assumed that every expert always has the same probability of making a good prediction in order to define what we mean by `small' regret. However, in reality experts might notice over time that their predictions are not very accurate. This could lead to them changing their strategy of predicting and hence they could start making better or worse predictions. Moreover, if the sequence of outcomes changes this could also lead to certain experts gaining an advantage and making better predictions. How do we then define `small' regret and how can we make the Squint algorithm adapt to changes in performance of the experts? After all, we want our algorithm to put the most weight on the best experts, but when one expert performed poorly at first and now starts performing very well, we want the algorithm to forget about the first few bad predictions. So even though the current regret bound still holds, we now want to bound the total regret obtained since the last time an expert's performance changed and not necessarily since $t=1$. In the remainder of this thesis, we will focus on these problems caused by varying performances of the experts.

\newpage
\section{Changing Environment}\label{sec:3}
In this chapter we will be looking at a changing environment for the prediction with expert advice setting, i.e.\ the experts' performances change over time. Jun et al.\ \cite{ChEnv} studied this with a tool named coin betting. We will use their research as a starting point and apply it to our learning task.

\subsection{(Strongly) Adaptive Algorithms}
When the environment changes, we want the learner to adapt to this change. Originally, the probability vector $\mathbf{w}_t$ was based on the data from the first round $t=1$ until the current round. However, when the environment has changed, we want the learner to forget about the data from before that point. So we only want to use data from an interval, starting at the last time the environment changed and ending at the current round. Applying this approach, we now look for algorithms which have `small' regret on every possible interval, since we do not know when the environment changes and thus which intervals to look at. First, we will adjust our definition of the total regret (\ref{eq:RTk}) to this situation. We define the total regret on a contiguous interval $I=[I_1,I_2]:=\{I_1,I_1+1,\dots,I_2\}$ with $I\subseteq\{1,2,\dots,T\}$ compared to expert $k$ by
\begin{equation}\label{eq:stregretint}
R_I^k := \sum_{t=I_1}^{I_2}r_t^k=\sum_{t=I_1}^{I_2}(\mathbf{w}_t^{\top}\mathbf{l}_t-l_t^k).
\end{equation}
We want an algorithm which makes $R_I^k$ grow sublinearly over time on its interval. Often used definitions are the following: we call an algorithm adaptive to a changing environment if $R_I^k$ grows with $\sqrt{T}$ for every contiguous $I\subseteq\{1,2,\dots,T\}$ and strongly adaptive if it grows with $\sqrt{|I|\ln{T}}$ where $|I|=I_2-I_1+1$ is the length of the interval. Note that the stopping time $T$ is known beforehand and hence can be used by the algorithm, but the interval $I$, on which we measure the regret, is not known and cannot be used by the algorithm. This is because we want to have `small' regret on all possible intervals $I$. \par
If we knew beforehand when the environment changes, we could apply Hedge or Squint to every separate interval between the times the environment changes. However, we do not know when the environment changes and thus cannot tell what the starting points of these intervals are. That is where a so-called meta algorithm which learns these starting points comes in.

\subsection{Meta Algorithms}\label{sec:meta-alg}
The idea of a meta algorithm is that it uses so-called black-box algorithms like the Hedge algorithm. For each possible starting point a black-box algorithm is introduced. These algorithms compute $\mathbf{w}_t$ for every time step of their intervals and can only use data of the environment available from their starting points onwards. In our case the only difference between the black-box algorithms is their interval. We choose the type of algorithm (e.g.\ Hedge or Squint) to be the same for all black-boxes.\par
The meta algorithm then keeps track of a probability vector $\mathbf{q}_t$ on the active black-box algorithms (i.e.\ the algorithms that produce an output at time $t$). It uses this probability vector to randomly determine which black-box algorithm it should follow. Based on knowledge from previous rounds the probability vector is adjusted. This is actually very similar to our prediction with expert advice setting where the black-box algorithms are the experts and the meta algorithm is the learner.\par
When using this meta algorithm we should reformulate our regret on an interval $I$. Let $\mathcal{B}=\{1, 2, \dots, B\}$ be the set of all black-box algorithms and let $\mathcal{B}_t\subseteq\mathcal{B}$ be the set of active black-box algorithms at time $t$. For the algorithm $b\in\mathcal{B}_t$ we define $\mathbf{w}_t^b$ as its computed probability vector at time $t$. Similar to (\ref{eq:stregretint}), the regret for this black-box algorithm on the interval $I$ is denoted by $R_I^k(b) := \sum_{t\in I}\big((\mathbf{w}_t^b)^{\top}\mathbf{l}_t-l_t^k\big)$. Finally, $\mathbf{q}_t$ is the probability vector with components $q_t^b$ on the set of black-box algorithms $\mathcal{B}$, which is determined by the meta algorithm $\mathcal{M}$. It has the properties $q_t^b\geq0$ if $b\in\mathcal{B}_t$, $q_t^b=0$ if $b\notin\mathcal{B}_t$ and $\sum_{b\in\mathcal{B}}q_t^b=1$. Since the regret is the difference between the expected loss of the learner and the loss of the expert, we conclude that the regret of the meta algorithm in combination with the black-box algorithms is given by
\[R_I^k(\mathcal{M}(\mathcal{B})):=\sum_{t\in I}\Big(\sum_{b\in\mathcal{B}_t}q_t^b\big(\mathbf{w}_t^b\big)^{\top}\mathbf{l}_t-l_t^k\Big).\]
Before we go into the precise formulation of the algorithm, we would like to look at the computational complexity. When we introduce a black-box algorithm for every possible starting point, we would have to keep track of a lot of algorithms. At time $t$ there would be $t$ active algorithms (one for each possible starting point, including the current round), which would sum to $\sum_{t=1}^Tt = \Theta(T^2)$. So the computation time would scale quadratically with $T$, whereas we want it to scale linearly with $T$, which is the case for algorithms in a non-changing environment. In order to reduce the computation time, we will only look at the so-called geometric covering intervals. Using these we will take less than $t$ black-box algorithms into account when at time $t$.

\subsection{Geometric Covering Intervals}
\FloatBarrier
Daniely et al.\ \cite{ChEnvPr} prove Lemma \ref{lem:covint} below, which states that every possible contiguous interval $I\subseteq\mathbb{N}$ can be partitioned into the geometric covering intervals. Hence, if we can guarantee a `small' sum of regrets on these specific intervals, we can guarantee a `small' regret on every possible interval. We will now make this more precise.\par
For every $n\in\mathbb{N}\cup\{0\}$, define $\mathcal{J}_n := \big\{[i\cdot2^n,(i+1)\cdot2^n-1]:i\in\mathbb{N}\big\}$ to be the collection of intervals of length $2^n$ with starting points $i\cdot2^n$. The geometric covering intervals are
\[\mathcal{J}:=\bigcup\limits_{n\in\mathbb{N}\cup\{0\}}\mathcal{J}_n.\]
As described by Jun et al.\ \cite{ChEnv}, ``\textit{$\mathcal{J}$ is the set of intervals of doubling length, with intervals of size $2^n$ exactly partitioning the set $\mathbb{N}\setminus\{1,2,\dots,2^n-1\}$}''. They also give a visualization of this set, displayed in Figure \ref{fig:JVis}.
\noindent
\FloatBarrier
\begin{figure}[h]
    \centering
    \includegraphics[scale=0.4]{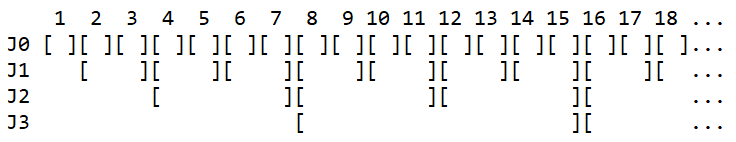}
    \caption{Geometric covering intervals. Each interval is denoted by [ ]. Adopted from \cite{ChEnv}.}
    \label{fig:JVis}
\end{figure}
\FloatBarrier
\noindent
Since $t$ is an element of an interval $J\in\mathcal{J}_n$ only if $n\leq\lfloor^2\log{t}\rfloor$, we see that at time $t$ there are $1+\lfloor^2\log{t}\rfloor$ active intervals. We will only use black-box algorithms which function solely on these intervals and not outside of them. So at time $t$ we also have $1+\lfloor^2\log{t}\rfloor$ active black-box algorithms, which sums to $\sum_{t=1}^T\big(1+\lfloor^2\log{t}\rfloor\big) = \Theta(T\ln{T})$. This improves the previous quadratic scaling $T^2$ to nearly linear in $T$, except for the minor overhead of an extra logarithmic factor. Thus, using the geometric covering intervals requires considerably less computation time than looking at all possible intervals with endpoint $t$. As mentioned before, the following lemma allows us to use the geometric covering intervals:

\begin{lemma}[{\cite[Lemma 1.2]{ChEnvPr}}]
\label{lem:covint}
Let $I=[I_1,I_2]\subseteq \mathbb{N}$ be an arbitrary interval. Then $I$ can be partitioned into a finite number of disjoint and consecutive intervals denoted $J^{(-c)},J^{(-c+1)},\dots,J^{(0)}, J^{(1)},\dots,J^{(d-1)},J^{(d)}$ with $c,d\geq0$ such that $J^{(i)}\in\mathcal{J}$ for all $i\in[-c,d]$ and such that
\[|J^{(i-1)}|/|J^{(i)}|\leq1/2\text{ for }i=-c+1,\dots,0\]
and \[|J^{(i+1)}|/|J^{(i)}|\leq1/2\text{ for }i=1,\dots,d-1.\]
\end{lemma}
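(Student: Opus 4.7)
The plan is to construct the partition greedily from left to right and then identify a ``peak'' greedy interval to use as the pivot for the labeling. Starting at $p = I_1$, at each step I pick the largest $J \in \mathcal{J}$ with left endpoint $p$ that still fits inside $I$; equivalently, $J = [p, p + 2^n - 1]$ where $n = \min\{v_2(p),\, \lfloor \log_2(I_2 - p + 1)\rfloor\}$, with $v_2$ denoting the $2$-adic valuation, and then advance $p$ to $p + 2^n$. This obviously yields a partition of $I$ into elements of $\mathcal{J}$; the nontrivial work is to check that the resulting lengths satisfy the doubling/halving requirements after a suitable relabeling.

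Call step $j$ of the greedy a \emph{valuation step} if $v_2(p_j) < \lfloor \log_2(I_2 - p_j + 1)\rfloor$ and a \emph{remaining step} otherwise. The key observations are: in a valuation step, $p_{j+1} = p_j + 2^{v_2(p_j)}$ has strictly larger $2$-adic valuation than $p_j$, while $\lfloor \log_2(I_2 - p_{j+1} + 1)\rfloor \ge \lfloor \log_2(I_2 - p_j + 1)\rfloor - 1$, which together force $n_{j+1} \ge n_j + 1$, except in the boundary case $v_2(p_j) = \lfloor \log_2(I_2 - p_j + 1)\rfloor - 1$, where one might get $n_{j+1} = n_j$ but in that case step $j+1$ is necessarily a remaining step. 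In a remaining step, the available space drops strictly below $2^{n_j}$, forcing $n_{j+1} \le n_j - 1$, and the process stays in the remaining-step regime thereafter. Consequently the sequence $L_0, L_1, \ldots, L_m$ of greedy lengths is unimodal: it at least doubles strictly until some peak, possibly plateaus once at the peak on two equal maximal values, and at least halves strictly afterwards.

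To finish, I identify the peak. If the maximum length is attained by a single greedy interval $J_\ell$, set $J^{(0)} := J_\ell$, $J^{(-j)} := J_{\ell - j}$ for $j \ge 1$, and $J^{(j)} := J_{\ell + j}$ for $j \ge 1$. If instead it is attained by two consecutive intervals $J_\ell, J_{\ell+1}$ of equal length, set $J^{(0)} := J_\ell$ and $J^{(1)} := J_{\ell+1}$, shifting the other indices accordingly. The latter case is precisely why the lemma imposes no relation between $J^{(0)}$ and $J^{(1)}$: the plateau sits between them. The strict doubling on the indices $-c, \ldots, 0$ and the strict halving on $1, \ldots, d$ are then immediate from the phase analysis.

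The main obstacle is really just the bookkeeping at the plateau: one must verify that the boundary case $v_2(p_j) = \lfloor \log_2(I_2 - p_j + 1)\rfloor - 1$ is the only way to produce $n_{j+1} = n_j$, that it forces the transition into the remaining-step regime, and hence that at most two consecutive greedy intervals can share the maximum length. Once this case split is done carefully, the rest of the argument is essentially just reading off the conclusion on each side of the peak.
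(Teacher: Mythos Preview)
The paper does not actually prove Lemma~\ref{lem:covint}; it is quoted verbatim from Daniely, Gonen and Shalev-Shwartz and used as a black box, so there is no ``paper's own proof'' to compare against. That said, your greedy construction is a correct and natural proof of the cited statement.

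Your argument is sound. The key facts you use all check out: after a valuation step $p_{j+1}=p_j+2^{v_2(p_j)}$ indeed has $v_2(p_{j+1})\ge v_2(p_j)+1$, and the remaining length satisfies $\lfloor\log_2 R'\rfloor\ge\lfloor\log_2 R\rfloor-1$ because $2^{n_j}\le R/2$; together these give $n_{j+1}\ge n_j+1$ unless $v_2(p_j)=\lfloor\log_2 R\rfloor-1$, in which case $\lfloor\log_2 R'\rfloor\in\{n_j,n_j+1\}$ while $v_2(p_{j+1})\ge n_j+1\ge\lfloor\log_2 R'\rfloor$, so step $j+1$ is a remaining step as you claim. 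After a remaining step one has $v_2(p_{j+1})\ge n_j>\lfloor\log_2 R'\rfloor$, so the process never returns to the valuation regime and $n$ strictly decreases. This yields the unimodal shape with a plateau of length at most two, and your placement of the plateau across the indices $0$ and $1$ exploits exactly the gap in the lemma's hypotheses. The only thing I would add when you write it up in full is the one-line observation that the process cannot terminate on a valuation step (since then $R'\ge 2^{n_j}>0$), which guarantees the remaining-step phase is nonempty and the peak is well defined.
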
\noindent
The partitioning from the lemma is a sequence of smaller intervals which successively double and then successively halve in length. With this partitioning we can now decompose the total regret compared to an expert $k$, obtained by the meta algorithm with the black-box algorithms. To do so, we introduce the notation $b_J$ for the black-box algorithm which operates only on the interval $J$ and not outside of it.

\begin{align*}
R_I^k\big(\mathcal{M}(\mathcal{B})\big) & = \sum_{t\in I}\Big(\sum_{b\in\mathcal{B}_t}q_t^b\big(\mathbf{w}_t^b\big)^{\top}\mathbf{l}_t-l_t^k\Big) \\
 & = \sum_{i=-c}^d\sum_{t\in J^{(i)}}\Big(\sum_{b\in\mathcal{B}_t}q_t^b\big(\mathbf{w}_t^b\big)^{\top}\mathbf{l}_t -\big(\mathbf{w}_t^{b_{J^{(i)}}}\big)^{\top}\mathbf{l}_t + \big(\mathbf{w}_t^{b_{J^{(i)}}}\big)^{\top}\mathbf{l}_t-l_t^k\Big) \\
 & = \sum_{i=-c}^d\sum_{t\in J^{(i)}}\Big(\sum_{b\in\mathcal{B}_t}q_t^b\big(\mathbf{w}_t^b\big)^{\top}\mathbf{l}_t-\big(\mathbf{w}_t^{b_{J^{(i)}}}\big)^{\top}\mathbf{l}_t\Big) \\
 & \qquad + \sum_{i=-c}^d\sum_{t\in J^{(i)}}\Big(\big(\mathbf{w}_t^{b_{J^{(i)}}}\big)^{\top}\mathbf{l}_t-l_t^k\Big) \\
 & = \sum_{i=-c}^d\sum_{t\in J^{(i)}} \text{instantaneous regret of }\mathcal{M}\text{ compared to black-box }b_{J^{(i)}} \\
 & \qquad + \sum_{i=-c}^d\sum_{t\in J^{(i)}} \text{instantaneous regret of }b_{J^{(i)}}\text{ compared to expert }k \\
 & = \sum_{i=-c}^d \text{total regret on }J^{(i)}\text{ of }\mathcal{M}\text{ compared to black-box }b_{J^{(i)}} \\
 & \qquad + \sum_{i=-c}^d \text{total regret on }J^{(i)}\text{ of }b_{J^{(i)}}\text{ compared to expert }k \\
\end{align*}
From now on we use the following terms for the different regrets: the $\mathcal{M}$-regret (denoted $R_I^b(\mathcal{M})$) is the total regret on $I$ of $\mathcal{M}$ compared to a black-box $b$. The $b$-regret (denoted $R_I^k(b)$) is the total regret on $I$ of $b$ compared to expert $k$. Finally, the $\mathcal{M}(\mathcal{B})$-regret (denoted $R_I^k(\mathcal{M}(\mathcal{B}))$) is the total regret on $I$ of $\mathcal{M}$ with its black-box algorithms $\mathcal{B}$ compared to expert $k$. This gives the following equation:
\begin{equation}\label{eq:RIkdec}
R_I^k\big(\mathcal{M}(\mathcal{B})\big) = \sum_{i=-c}^d \Big( R_{J^{(i)}}^{b_{J^{(i)}}}(\mathcal{M}) + R_{J^{(i)}}^k(b_{J^{(i)}})\Big)
\end{equation}
So in order to have `small' regret on every interval $I$, we need the following properties:
\begin{enumerate}
\item the sum over the intervals $J^{(i)}$ of the $b_{J^{(i)}}$-regret must be `small' for all combinations of the $J^{(i)}$
\item the sum over the intervals $J^{(i)}$ of the $\mathcal{M}$-regret must be `small' for all combinations of the $J^{(i)}$
\end{enumerate}
We will first go into further detail on the meta algorithm that uses the geometric covering intervals, after which we will show these properties are satisfied.

\subsection{CBCE Algorithm}
We will now introduce the meta algorithm, called Coin Betting for Changing Environment (CBCE), which was established by Jun et al.\ \cite{ChEnv} and based on the work of Orabona and Pál \cite{CoinB}. We will present a slight adjustment of this algorithm such that it matches our notation. However, the functioning and corresponding properties stay the same.\par
For $t\in J = [J_1,J_2]$, let $r_t^{b_J}(\mathcal{M}) := \sum_{b\in\mathcal{B}_t}q_t^b\big(\mathbf{w}_t^b\big)^{\top}\mathbf{l}_t-\big(\mathbf{w}_t^{b_J}\big)^{\top}\mathbf{l}_t$ be the instantaneous regret of $\mathcal{M}$ compared to $b_J$. Let $\tau(b)$ be a prior distribution on the set of all (geometric covering interval) black-box algorithms $\{b_J : J\in\mathcal{J}\}$ and let $\tau_t(b)$ be the prior $\tau(b)$ restricted to all active black-box algorithms at time $t$. So $\tau_t(b_J) = \tau(b_J)/\tau(\mathcal{B}_t)$ if $t\in J$ and $\tau_t(b_J)=0$ if $t\notin J$ where $\tau(\mathcal{B}_t)$ is the weight put on the active algorithms by $\tau$. Then each round the probability vector $\mathbf{q}_t$ on these black-box algorithms is computed by
\[\mathbf{q}_t = \left\{\begin{matrix}\hat{\mathbf{q}}_t/||\hat{\mathbf{q}}_t||_1 && \text{if } ||\hat{\mathbf{q}}_t||_1 > 0 \\ \tau_t && \text{if } ||\hat{\mathbf{q}}_t||_1=0\end{matrix}\right.\]
where
\[\hat{q}_t^{b_J} = \left\{\begin{matrix} \tau(b_J)\cdot\max\{v_t^{b_J},0\} && \text{if } b_J\in\mathcal{B}_t \\ 0 && \text{if } b_J\notin\mathcal{B}_t\end{matrix}\right.,\]
\[v_t^{b_J} = \frac{1}{\big|[J_1,t]\big|}\Big(\sum_{i=J_1}^{t-1}g_i^{b_J}\Big)\cdot\Big(1+\sum_{i=1}^{t-1}z_i^{b_J}v_i^{b_J}\Big),\]
\[z_t^{b_J} = \left\{\begin{matrix} \sum_{i=1}^{t-1}g_i^{b_J} && \text{if } b_J\in \mathcal{B}_t \\ 0 && \text{if } b_J\notin \mathcal{B}_t \end{matrix}\right.\]
and \[g_t^{b_J} = \left\{\begin{matrix}r_t^{b_J}(\mathcal{M}) && \text{if } v_t^{b_J} > 0 \\ \max\big\{r_t^{b_J}(\mathcal{M}),0\big\} && \text{if } v_t^{b_J} \leq 0 \end{matrix}\right..\]
To clarify the functioning of the algorithm, we will also describe it in pseudocode (see Algorithm \ref{alg:CBCE}).
\begin{algorithm}[!ht]
\caption{Coin Betting for Changing Environment, adopted from \cite{ChEnv}.}\label{alg:CBCE}
\begin{algorithmic}
\State \textbf{Input:} A set of black-box algorithms $\mathcal{B} = \{b_J:J\in\mathcal{J}\}$ and a prior distribution $\tau(b)$ on $\mathcal{B}$.
\For{$t=1$ \textbf{to} $T$}
	\For{$b_J\in\mathcal{B}_t$}
		\State $z_t^{b_J} \gets \left\{\begin{matrix} \sum_{i=1}^{t-1}g_i^{b_J} && \text{if } b_J\in \mathcal{B}_t \\ 0 && \text{if } b_J\notin \mathcal{B}_t \end{matrix}\right.$
		\State $v_t^{b_J} \gets \frac{1}{|[J_1,t]|}\big(\sum_{i=J_1}^{t-1}g_i^{b_J}\big)\cdot\big(1+\sum_{i=1}^{t-1}z_i^{b_J}v_i^{b_J}\big)$
	\EndFor
	\For{$b_J\in\mathcal{B}$}
		\State $\hat{q}_t^{b_J} \gets \left\{\begin{matrix} \tau(b_J)\cdot\max\{v_t^{b_J},0\} && \text{if } b_J\in\mathcal{B}_t \\ 0 && \text{if } b_J\notin\mathcal{B}_t\end{matrix}\right.$
	\EndFor
	\State $\mathbf{q}_t \gets \left\{\begin{matrix}\hat{\mathbf{q}}_t/||\hat{\mathbf{q}}_t||_1 && \text{if } ||\hat{\mathbf{q}}_t||_1 > 0 \\ \tau_t && \text{if } ||\hat{\mathbf{q}}_t||_1=0\end{matrix}\right.$
	\State Obtain the losses $\mathbf{l}_t$ from the environment
	\For{$b_J\in\mathcal{B}_t$}
		\State Obtain the probability vectors $\mathbf{w}_t^{b_J}$ from the black-box algorithms
		\State $r_t^{b_J}(\mathcal{M}) \gets \sum_{b\in\mathcal{B}_t}q_t^b\big(\mathbf{w}_t^b\big)^{\top}\mathbf{l}_t-\big(\mathbf{w}_t^{b_J}\big)^{\top}\mathbf{l}_t$
		\State $g_t^{b_J} \gets \left\{\begin{matrix}r_t^{b_J}(\mathcal{M}) && \text{if } v_t^{b_J} > 0 \\ \max\big\{r_t^{b_J}(\mathcal{M}),0\big\} && \text{if } v_t^{b_J} \leq 0 \end{matrix}\right.$
	\EndFor
\EndFor
\end{algorithmic}
\end{algorithm}
The creators prove a bound on the $\mathcal{M}$-regret of CBCE for the following choice of prior:
\[\tau(b_J) = Z^{-1}\cdot\frac{1}{J_1^2\big(1+\lfloor ^2\log{J_1}\rfloor\big)} \text{ for all } J\in\mathcal{J}\]
Here $Z$ is a normalization factor. The bound, given on interval $J\in\mathcal{J}$, is
\[R_J^{b_J}(\mathcal{M}) \leq \sqrt{|J|\big(7\ln(J_2)+5\big)} \preccurlyeq \sqrt{|J|\ln{J_2}}.\]
Using this result, we will now show that the two necessary properties in order to have `small' regret on an interval $I$ are satisfied when one applies Hedge for the black-box algorithms. This proof is also adopted from \cite{ChEnv}.

\subsection{Applying CBCE to Hedge}\label{sec:cbcehedge}
Using (\ref{eq:RIkdec}) we will bound the $\mathcal{M}(\mathcal{B})$-regret on an arbitrary interval $I=[I_1,I_2]$ compared to expert $k$ where we take CBCE as our meta algorithm and Hedge for the black-box algorithms. Since the $b_J$-regret on an interval $J$ is equal to the standard regret (\ref{eq:RTk}) for $T=|J|$, the $b_J$-regret for Hedge is bounded by
\[R_J^k(b_J)\leq\sqrt{\frac{|J|}{2}\ln{K}}.\]
This gives the bound
\begin{align*}
R_I^k\big(\mathcal{M}(\mathcal{B})\big) & = \sum_{i=-c}^d \Big( R_{J^{(i)}}^{b_{J^{(i)}}}(\mathcal{M}) + R_{J^{(i)}}^k(b_{J^{(i)}})\Big) \\
 & \leq \sum_{i=-c}^d \bigg(\sqrt{|J^{(i)}|\big(7\ln(J_2^{(i)})+5\big)} + \sqrt{\frac{|J^{(i)}|}{2}\ln{K}}\bigg).
\end{align*}
Since the intervals $J^{(i)}$ form the partitioning from Lemma \ref{lem:covint}, we know that $J^{(i)}\subseteq I$ and hence $J_2^{(i)} \leq I_2$ for all $i$. This gives
\begin{align*}
R_I^k\big(\mathcal{M}(\mathcal{B})\big) & \leq \sum_{i=-c}^d \bigg(\sqrt{|J^{(i)}|\big(7\ln(I_2)+5\big)} + \sqrt{\frac{|J^{(i)}|}{2}\ln{K}}\bigg) \\
 & = \bigg(\sqrt{7\ln(I_2)+5}+\sqrt{\frac{1}{2}\ln{K}}\bigg)\sum_{i=-c}^d\big|J^{(i)}\big|^{1/2}.
\end{align*}
We know that $|J^{(0)}|, |J^{(1)}| \leq |I|$ and that the lengths successively halve when $i$ goes down from 0 or up from 1. So we obtain
\begin{align*}
\sum_{i=-c}^d\big|J^{(i)}\big|^{1/2} & \leq 2\sum_{n=0}^{\infty}\big(2^{-n}|I|\big)^{1/2} \\
 & = \frac{2\sqrt{2}}{\sqrt{2}-1}|I|^{1/2}.
\end{align*}
Now we bound the regret by
\begin{align*}
R_I^k\big(\mathcal{M}(\mathcal{B})\big) & \leq \bigg(\sqrt{7\ln(I_2)+5}+\sqrt{\frac{1}{2}\ln{K}}\bigg) \cdot \frac{2\sqrt{2}}{\sqrt{2}-1}|I|^{1/2} \\
 & = \frac{2\sqrt{2}}{\sqrt{2}-1}\sqrt{|I|\big(7\ln(I_2)+5\big)}+\frac{2}{\sqrt{2}-1}\sqrt{|I|\ln{K}} \\
 & \preccurlyeq \sqrt{|I|\ln{I_2}}+\sqrt{|I|\ln{K}}.
\end{align*}
We conclude that the two properties required for a `small' regret on $I$ are satisfied. Since $I_2\leq T$, we find that CBCE combined with Hedge is a strongly adaptive algorithm. In the previous chapter we saw that Squint gave a better bound than Hedge. Consequently, we want to try to obtain a better bound in a changing environment by applying CBCE to Squint.

\subsection{Applying CBCE to Squint}\label{subsec:CBCESquint}
The creators of Squint \cite{Squint} obtained bounds for $R_T^{\mathcal{K}} = \mathbb{E}_{\pi(k|\mathcal{K})}R_T^k$, the expected regret compared to a subset $\mathcal{K}\subseteq\{1,2,\dots,K\}$ of experts. Likewise, we now define this regret on the interval $I$: $R_I^{\mathcal{K}} := \mathbb{E}_{\pi(k|\mathcal{K})}R_I^k$. Furthermore, we define $V_I^{k}:=\sum_{t=I_1}^{I_2}(r_t^k)^2$ and $V_I^{\mathcal{K}}:=\mathbb{E}_{\pi(k|\mathcal{K})}V_I^k$ similarly.\par
For the expected regret compared to a subset $\mathcal{K}$ we can make the same decomposition as displayed in equation (\ref{eq:RIkdec}). We then obtain
\begin{align*}
R_I^{\mathcal{K}}\big(\mathcal{M}(\mathcal{B})\big) & = \mathbb{E}_{\pi(k|\mathcal{K})}\Big[R_I^k\big(\mathcal{M}(\mathcal{B})\big)\Big] \\
 & = \mathbb{E}_{\pi(k|\mathcal{K})}\bigg[\sum_{i=-c}^d \Big( R_{J^{(i)}}^{b_{J^{(i)}}}(\mathcal{M}) + R_{J^{(i)}}^k(b_{J^{(i)}})\Big)\bigg] \\
 & = \sum_{i=-c}^d \Big( R_{J^{(i)}}^{b_{J^{(i)}}}(\mathcal{M}) + R_{J^{(i)}}^{\mathcal{K}}(b_{J^{(i)}})\Big).
\end{align*}
This is very similar to equation (\ref{eq:RIkdec}). The analysis of the sum over the first term stays the same as in Section \ref{sec:cbcehedge}. What is left is the analysis of the sum over the second term, which is different than before. To do this, we would like to know over how many intervals we are summing (i.e.\ what is $c+d+1$?). Lemma \ref{lem:covint} tells us that the $J^{(i)}$ first successively at least double and then successively at most halve in length. This enables us to bound the length of $I$ from below by
\begin{align*}
|I| & = |J^{(-c)}| + |J^{(-c+1)}| + \dots + |J^{(0)}| + |J^{(1)}| + \dots + |J^{(d-1)}| + |J^{(d)}| \\
 & \geq 1 + 2 + \dots + 2^{c} + 2^{d-1} + \dots +2 + 1 \\
 & = \sum_{i=0}^c2^i + \sum_{i=0}^{d-1}2^i \\
 & = 2^{c+1}-1+2^d-1.
\end{align*}
Defining $\alpha := \max\{c+1,d\}$ gives
\[2^{\alpha} \leq |I|+2\]
and hence
\[\alpha \leq {}^2\log(|I|+2).\]
This finally gives
\[c+d+1 \leq 2\alpha \leq 2\cdot {}^2\log(|I|+2).\]
Remember that the total regret for Squint in a non-changing environment is bounded by
\begin{equation}\tag{\ref{eq:SquintRTk}}
R_T^{\mathcal{K}} \preccurlyeq \sqrt{V_T^{\mathcal{K}}\big(\ln(\ln{T})-\ln{\pi(\mathcal{K})}\big)} + \ln(\ln{T})-\ln{\pi(\mathcal{K})}.
\end{equation}
We will now determine the bound for the sum of the Squint regret over the intervals $J^{(i)}$. In the summation over the intervals $J^{(i)}$, we first only look at the second and third term of the Squint regret.
\begin{align*}
\sum_{i=-c}^d\big(\ln(\ln {|J^{(i)}|})-\ln\pi(\mathcal{K})\big) & \leq \sum_{i=-c}^d\big(\ln(\ln {|I|})-\ln\pi(\mathcal{K})\big)\\
& \leq 2\cdot {}^2\log(|I|+2)\cdot \big(\ln(\ln{|I|})-\ln\pi(\mathcal{K})\big) \\
 & \preccurlyeq \ln|I|\cdot\big(\ln(\ln{|I|})-\ln\pi(\mathcal{K})\big)
 \end{align*}
In order to prove a good bound on the first term, we need the following lemma:
\begin{lemma}
\label{lem:rootineq}
For all $a_1, a_2, \dots, a_m \in \mathbb{R}_{\geq0}$ with $m\in\mathbb{N}$:
\[\sum_{i=1}^m\sqrt{a_i}\leq\sqrt{m\sum_{i=1}^ma_i}\]
\end{lemma}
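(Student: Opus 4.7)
The plan is to recognize this as a direct application of the Cauchy--Schwarz inequality. Specifically, I would write the left-hand side as an inner product of the all-ones vector with the vector $(\sqrt{a_1},\dots,\sqrt{a_m})$ in $\mathbb{R}^m$:
\[
\sum_{i=1}^m \sqrt{a_i} \;=\; \sum_{i=1}^m 1 \cdot \sqrt{a_i}.
\]
Applying Cauchy--Schwarz to this sum yields
\[
\sum_{i=1}^m 1 \cdot \sqrt{a_i} \;\leq\; \sqrt{\sum_{i=1}^m 1^2} \cdot \sqrt{\sum_{i=1}^m \big(\sqrt{a_i}\big)^2} \;=\; \sqrt{m}\,\sqrt{\sum_{i=1}^m a_i},
\]
which is exactly $\sqrt{m\sum_{i=1}^m a_i}$. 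The nonnegativity assumption $a_i\geq 0$ is used only to ensure that all square roots are well-defined real numbers.

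If one prefers to avoid invoking Cauchy--Schwarz as a black box, an equivalent route is to square both sides. Then the claim reduces to
\[
\Big(\sum_{i=1}^m \sqrt{a_i}\Big)^2 = \sum_{i=1}^m a_i + 2\sum_{1\leq i<j\leq m}\sqrt{a_i a_j} \;\leq\; m\sum_{i=1}^m a_i,
\]
which after rearranging becomes $2\sum_{i<j}\sqrt{a_i a_j}\leq (m-1)\sum_{i=1}^m a_i$. This follows from the AM--GM inequality $2\sqrt{a_i a_j}\leq a_i + a_j$ summed over all unordered pairs, since each index $i$ then appears exactly $m-1$ times on the right. Squaring is legal because both sides of the original inequality are nonnegative.

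There is essentially no obstacle in this proof: the statement is a clean one-line consequence of Cauchy--Schwarz (equivalently, of concavity of $\sqrt{\,\cdot\,}$ via Jensen's inequality). The only thing to be careful about is that the constant on the right is $m$ rather than something smaller, so the bound is tight precisely when all the $a_i$ are equal, a feature that matches how the lemma will be used later when summing $\sqrt{V_{J^{(i)}}^{\mathcal{K}}}$ over the $O(\log|I|)$ pieces of the geometric covering partition.
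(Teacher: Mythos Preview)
Your proof is correct. The paper's own argument takes a slightly different route: it views $\frac{1}{m}\sum_{i=1}^m\sqrt{a_i}$ as an expectation under the uniform distribution on $\{a_1,\dots,a_m\}$ and applies Jensen's inequality to the concave function $\sqrt{\,\cdot\,}$, obtaining $\mathbb{E}[\sqrt{a}]\leq\sqrt{\mathbb{E}[a]}$ and then multiplying through by $m$. Your Cauchy--Schwarz argument and the paper's Jensen argument are essentially two standard phrasings of the same underlying inequality (you even note this equivalence yourself), so there is no meaningful gap between the two; both are one-line proofs. Your alternative AM--GM route is also valid but is the most laborious of the three options.
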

\begin{proof}[\textbf{Proof}]
Let $P_u$ denote the uniform distribution on the set $\{a_1,a_2,\dots,a_m\}$. Since the square root is a concave function on $\mathbb{R}_{\geq0}$, we can use Jensen's inequality and obtain the following:
\begin{align*}
\sum_{i=1}^m\sqrt{a_i} & = m\sum_{i=1}^m\frac{1}{m}\sqrt{a_i} \\
 & = m\mathbb{E}_{P_u}\big[\sqrt{a}\big] \\
 & \leq m\sqrt{\mathbb{E}_{P_u}[a]} \\
 & = m\sqrt{\sum_{i=1}^m\frac{1}{m}a_i} \\
 & = \sqrt{m\sum_{i=1}^ma_i}
\end{align*}
\end{proof}
\noindent
Using this lemma, we will derive a bound on the first term of the Squint regret:
\begin{align*}
& \sum_{i=-c}^d\sqrt{V_{J^{(i)}}^{\mathcal{K}}\big(\ln(\ln{|J^{(i)}|})-\ln\pi(\mathcal{K})\big)} \\
& \qquad \qquad \leq \sqrt{\ln(\ln{|I|})-\ln\pi(\mathcal{K})}\cdot\sum_{i=-c}^d\sqrt{V_{J^{(i)}}^{\mathcal{K}}} \\
& \qquad \qquad \leq \sqrt{\ln(\ln{|I|})-\ln\pi(\mathcal{K})}\cdot\sqrt{2\cdot {}^2\log(|I|+2)\cdot\sum_{i=-c}^dV_{J^{(i)}}^{\mathcal{K}}}\\
& \qquad \qquad \leq \sqrt{\ln(\ln{|I|})-\ln\pi(\mathcal{K})}\cdot\sqrt{2\cdot {}^2\log(|I|+2)\cdot V_I^{\mathcal{K}}} \\
& \qquad \qquad \preccurlyeq \sqrt{\ln|I|\cdot V_I^{\mathcal{K}}\big(\ln(\ln{|I|})-\ln\pi(\mathcal{K})\big)}
\end{align*}
Conclusively, the bound of CBCE combined with Squint is
\begin{equation}\label{eq:RIKwbound}
\begin{aligned}
R_I^{\mathcal{K}}\big(\mathcal{M}(\mathcal{B})\big) & \preccurlyeq \sqrt{|I|\ln{I_2}} + \sqrt{\ln|I|\cdot V_I^{\mathcal{K}}\big(\ln(\ln{|I|})-\ln\pi(\mathcal{K})\big)}\\
& \qquad + \ln|I|\cdot\big(\ln(\ln|I|)-\ln\pi(\mathcal{K})\big).
\end{aligned}
\end{equation}
Remember that $V_I^{\mathcal{K}}\leq|I|$ and that $\ln(\ln|I|)$ can be neglected, since it practically behaves like a small constant. As $|I| = I_2-I_1+1 \leq I_2$ and thus $\ln{|I|} \leq \ln{I_2}$, we can conclude that the first term grows faster over the interval length $|I|$ than the second term. Moreover, because $\ln|I|$ grows slower than $\sqrt{|I|}$, the first term also grows faster than the third term. Hence, the CBCE regret dominates the Squint regret in this expression. Likewise, the CBCE regret dominates the Hedge regret in the expression from Section \ref{sec:cbcehedge}. So the advantage Squint has over Hedge in a non-changing environment vanishes in a changing environment.\par
Therefore, our goal is to find a way to retain the advantages of Squint when applying it in a changing environment. In the following chapters we will focus on adjusting Squint for this matter. We will first give a detailed proof of the regular bound, so we can later on build on the same techniques and construct a bound in a changing environment.

\newpage
\section{Squint}\label{sec:4}
After having discussed the effects of a changing environment, we will now go back to a non-changing environment. In order to make Squint work well under varying circumstances, we first have to go deeper into its properties for the non-changing case. Specifically, we will show a proof of its regret bound (\ref{eq:SquintRTk}) from Chapter \ref{sec:2}, as this will be an important stepping stone for proving a bound in a changing environment.

\subsection{Reduction to a Surrogate Task}
First, we will introduce a new task, the so-called surrogate task, to which we can reduce our original learning task, as done similarly by Van der Hoeven et al.\ \cite{ExpW}. Remember that, when we introduced Squint, we used a prior $\gamma(\eta)$ on the learning rate $\eta\in[0,\frac{1}{2}]$, since we did not know what the optimal learning rate was. For the surrogate task we try to find the best expert, but also the best learning rate. As a consequence, we now keep track of a probability distribution $P_t(\eta,k)$ on the learning rates $\eta\in[0,\frac{1}{2}]$ and the experts $k\in\{1,2,\dots,K\}$ instead of our previous probability vector $\mathbf{w}_t$. And instead of our previous loss $l_t^k$, which only depended on the expert, we now have a so-called surrogate loss $\hat{f}_t(\eta,k) := -\eta r_t^k + \eta^2 (r_t^k)^2$ which depends on the learning rate and the expert. Here, $r_t^k = \mathbf{w}_t^{\top}\mathbf{l}_t-l_t^k \in [-1,1]$ is the instantaneous regret compared to expert $k$, just like defined in Section \ref{subsec:2.1}. The definition of this surrogate loss will later turn out to be useful, since the sum of these losses yields the total regret $R_T^k$ and variance $V_T^k$. Next, we make some general definitions, which we will use multiple times in the remainder of this thesis:

\begin{definition}\label{def:mixloss}
Let $\Theta$ be a measurable space, let $P_t$ be a probability distribution on $\Theta$ for all $t$ and let $g_t(\theta)\in\mathbb{R}$ be the loss of $\theta\in\Theta$ at time $t$. Then the mix loss of $g_t$ under $P_t$ is defined by
\[L(g_t,P_t) := -\ln\mathbb{E}_{P_t}\big[e^{-g_t(\theta)}\big].\]
\end{definition}
\begin{definition}\label{def:surreg}
Let $\Theta$ be a measurable space, let $Q$ and $P_t$ be probability distributions on $\Theta$ for all $t$ and let $g_t(\theta)\in\mathbb{R}$ be the loss of $\theta\in\Theta$ at time $t$. Then the surrogate regret of the set of distributions $P_t$ compared to the distribution $Q$ under the losses $g_t$ is defined by
\begin{align*}
S_T^Q & := \sum_{t=1}^T\Big(L(g_t,P_t)-\mathbb{E}_Q\big[g_t(\theta)\big]\Big) \\
 & = -\sum_{t=1}^T\ln\mathbb{E}_{P_t}\big[e^{-g_t(\theta)}\big]-\mathbb{E}_Q\Big[\sum_{t=1}^Tg_t(\theta)\Big].
\end{align*}
\end{definition}\noindent
For now, we set $\theta=(\eta,k)$ and $\Theta=[0,\frac{1}{2}]\times\{1,2,\dots,K\}$. Our loss is $g_t(\theta) = \hat{f}_t(\eta,k)$. For the surrogate regret under these losses we obtain the expression
\begin{equation}\label{eq:Sreg}
S_T^Q = -\sum_{t=1}^T\ln\mathbb{E}_{P_t}\big[e^{-\hat{f}_t(\eta,k)}\big] - \mathbb{E}_Q\Big[\sum_{t=1}^T\hat{f}_t(\eta,k)\Big].
\end{equation}
The surrogate regret now represents the difference between the total mix loss of the learner (who uses $P_t$ as its distributions) and the total expected loss of any other distribution $Q$. The goal of the new task is to keep the surrogate regret small. We can now reduce our original task to this new task by only keeping track of a probability distribution on the experts and marginalizing out the learning rate:
\[\mathbf{w}_t = \frac{\mathbb{E}_{P_t}[\eta\mathbf{e}_k]}{\mathbb{E}_{P_t}[\eta]}\]
To derive the Squint algorithm, we will take a look at the definition of the Exponential Weights algorithm (EW) \cite{ExpW}, which we will also use more often in the remainder of this thesis. 
\begin{definition}\label{def:EW}
Let $\Theta$ be a measurable space, let $\rho(\theta)$ be a prior distribution on $\Theta$ and let $g_t(\theta)\in\mathbb{R}$ be the loss of $\theta\in\Theta$ at time $t$. Then the Exponential Weights algorithm sets the densities of the probability distributions $P_t$ to be equal to
\[dP_{t+1}(\theta) = \frac{e^{-\eta_{EW}\sum_{s=1}^tg_s(\theta)}d\rho(\theta)}{Z_{t+1}}\]
where
\[Z_{t+1} = \int_{\Theta}e^{-\eta_{EW}\sum_{s=1}^tg_s(\theta)}d\rho(\theta)\]
is a normalization factor and $\eta_{EW}$ is a parameter for this algorithm.
\end{definition}\noindent
When we determine our $P_t$ according to this algorithm with prior $\gamma(\eta)\times\pi(k)$ and $\eta_{EW}=1$, the density of $P_{t+1}$ equals
\begin{equation}\label{eq:EW}
dP_{t+1}(\eta,k) = \frac{e^{-\sum_{s=1}^t\hat{f}_s(\eta,k)}d\gamma(\eta)\times\pi(k)}{Z_{t+1}}
\end{equation}
with
\[Z_{t+1} = \sum_{k=1}^K\int_0^{\frac{1}{2}}e^{-\sum_{s=1}^t\hat{f}_s(\eta,k)}d\gamma(\eta)\times\pi(k).\]
This now yields
\begin{align*}
\mathbf{w}_{t+1} &= \frac{\mathbb{E}_{P_{t+1}}[\eta\mathbf{e}_k]}{\mathbb{E}_{P_{t+1}}[\eta]} \\
 & = \frac{\mathbb{E}_{\gamma(\eta)\pi(k)}\big[e^{-\sum_{s=1}^t\hat{f}_s(\eta,k)}\eta\mathbf{e}_k\big]}{\mathbb{E}_{\gamma(\eta)\pi(k)}\big[e^{-\sum_{s=1}^t\hat{f}_s(\eta,k)}\eta\big]} \\ 
 & = \frac{\mathbb{E}_{\gamma(\eta)\pi(k)}\big[e^{\eta R_t^k-\eta^2V_t^k}\eta \mathbf{e}_k\big]}{\mathbb{E}_{\gamma(\eta)\pi(k)}\big[e^{\eta R_t^k-\eta^2V_t^k}\eta\big]},
\end{align*}
the Squint algorithm (\ref{eq:Squint}) for our original task. Hence, we have reduced Squint for the original learning task to a new algorithm for the surrogate task. We will now use the new task to bound the regret of Squint.

\subsection{Proving the Bound}
In this section we aim to prove the following theorem:
\begin{theorem}[{\cite{ExpW}}]
\label{thm:Rnochange}
Let $T\geq1$. Determine $P_t$ by using the Exponential Weights algorithm (see Definition \ref{def:EW}) with $\eta_{EW}=1$ and prior $\gamma(\eta)\times\pi(k)$, where $\gamma$ is the uniform distribution on $\Gamma=\big\{\frac{1}{2}\big\}\cup\big\{2^{-i}:i=1,2,\dots,\lceil ^2\log{\sqrt{T}}\rceil\big\}$ and $\pi$ is arbitrary. Then
\[R_T^{\mathcal{K}}\leq2\sqrt{2V_T^{\mathcal{K}}A_T^{\mathcal{K}}}+4A_T^{\mathcal{K}}\]
with
\[A_T^{\mathcal{K}} := \big(\ln\lceil ^2\log{\sqrt{T}}\rceil-\ln{\pi(\mathcal{K})}\big) \vee 1\]
where $x \vee y := \max\{x,y\}$.
\end{theorem}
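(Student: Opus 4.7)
The plan is to exploit the surrogate-task reduction introduced above. The central identity $\sum_{t=1}^T \hat f_t(\eta,k) = -\eta R_T^k + \eta^2 V_T^k$ converts the surrogate regret $S_T^Q$, for a well-chosen $Q$, into an expression linear in the regret and variance we want to control, so the strategy is to sandwich $S_T^Q$ between a KL upper bound and a non-negative lower bound.

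For the upper bound, the fact that $P_t$ is produced by Exponential Weights with $\eta_{EW}=1$ and prior $\rho:=\gamma\times\pi$ makes the mix losses telescope to $\sum_{t=1}^T L(\hat f_t,P_t) = -\ln Z_{T+1}$ (with $Z_1=1$); the Gibbs inequality $-\ln \mathbb{E}_\rho[e^{-F}] \leq \mathbb{E}_Q[F] + \operatorname{KL}(Q\|\rho)$ applied to $F=\sum_t \hat f_t$ then yields $S_T^Q \leq \operatorname{KL}(Q\|\rho)$. For the lower bound, I would show each mix loss is non-negative using the calculus inequality $e^{x-x^2}\leq 1+x$ valid on $[-1/2,1/2]$: with $x=\eta r_t^k$, which lies in this range since $\eta\in[0,1/2]$ and $r_t^k\in[-1,1]$, one gets $\mathbb{E}_{P_t}\bigl[e^{-\hat f_t(\eta,k)}\bigr] \leq 1+\mathbb{E}_{P_t}[\eta r_t^k] = 1$. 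The crucial vanishing of the cross term follows directly from $\mathbf{w}_t = \mathbb{E}_{P_t}[\eta\mathbf{e}_k]/\mathbb{E}_{P_t}[\eta]$, which was defined precisely to kill it.

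Next, I would specialize to $Q = \delta_{\eta^*}\times \pi(\cdot\mid \mathcal{K})$ for a grid point $\eta^* \in \Gamma$. The product structure of $Q$ gives $\mathbb{E}_Q[\sum_t \hat f_t] = -\eta^* R_T^{\mathcal{K}} + (\eta^*)^2 V_T^{\mathcal{K}}$ and $\operatorname{KL}(Q\|\rho) = \ln|\Gamma| - \ln\pi(\mathcal{K}) \leq A_T^{\mathcal{K}}$, using $|\Gamma|=\lceil {}^2\log\sqrt{T}\rceil$. Combining the two bounds on $S_T^Q$ delivers, for every $\eta^*\in\Gamma$,
\[R_T^{\mathcal{K}} \leq \frac{A_T^{\mathcal{K}}}{\eta^*} + \eta^* V_T^{\mathcal{K}}.\]

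It remains to optimize this over the geometric grid. With $\eta^{\mathrm{opt}} := \sqrt{A_T^{\mathcal{K}}/V_T^{\mathcal{K}}}$ the continuous minimizer, there are two cases. If $\eta^{\mathrm{opt}}>1/\sqrt{2}$, i.e.\ $V_T^{\mathcal{K}}<2A_T^{\mathcal{K}}$, the boundary choice $\eta^*=1/2$ gives $R_T^{\mathcal{K}} \leq 2A_T^{\mathcal{K}}+V_T^{\mathcal{K}}/2 \leq 4A_T^{\mathcal{K}}$. Otherwise $\eta^{\mathrm{opt}}\in[2^{-m},1/\sqrt{2}]$, where $m=\lceil{}^2\log\sqrt{T}\rceil$ and the lower bound uses $V_T^{\mathcal{K}}\leq T$ together with $A_T^{\mathcal{K}}\geq 1$; rounding to the nearest element of $\Gamma$ then yields $\eta^*$ within a factor $\sqrt{2}$ of $\eta^{\mathrm{opt}}$, producing a total of $2\sqrt{2A_T^{\mathcal{K}} V_T^{\mathcal{K}}}$. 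Summing the two cases recovers the stated bound. The hardest step is the mix-loss non-negativity: it is precisely there that the tailor-made quadratic correction in $\hat f_t$, together with the very specific form of $\mathbf{w}_t$, combine with the elementary inequality $e^{x-x^2}\leq 1+x$ on $[-1/2,1/2]$ to make the algebraic structure of Squint pay off; the final grid-optimization is routine but needs care to produce exactly the constants $2\sqrt{2}$ and $4$.
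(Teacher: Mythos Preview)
Your proposal is correct and follows essentially the same route as the paper: the mix-loss non-negativity via $e^{x-x^2}\le 1+x$ (the paper's Lemma~\ref{lem:surreg}), the KL upper bound on $S_T^Q$ from the Exponential-Weights telescoping plus Donsker--Varadhan (Lemma~\ref{lem:Sreg}), the choice $Q=\delta_{\eta^*}\times\pi(\cdot\mid\mathcal{K})$, and a grid optimization. The only cosmetic differences are that the paper disposes of $T=1$ separately (where $|\Gamma|\neq\lceil{}^2\log\sqrt{T}\rceil$) and uses a one-sided factor-$2$ rounding to a grid point before optimizing $2\hat\eta V_T^{\mathcal{K}}+A_T^{\mathcal{K}}/\hat\eta$, whereas you use a symmetric factor-$\sqrt{2}$ rounding on the original expression; both yield the same constants.
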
\noindent
Theorem \ref{thm:Rnochange} results in the bound
\[R_T^{\mathcal{K}} \preccurlyeq \sqrt{V_T^{\mathcal{K}}\big(\ln(\ln{T})-\ln{\pi(\mathcal{K})}\big)} + \ln(\ln{T})-\ln{\pi(\mathcal{K})}\]
for the mentioned choices of priors $\gamma(\eta)$ and $\pi(k)$. This is exactly the bound (\ref{eq:SquintRTk}) we stated in Section \ref{subsec:2.3}. To be able to prove Theorem \ref{thm:Rnochange}, we need the following lemma:
\begin{lemma} [{\cite[Theorem 8]{ExpW}}]
\label{lem:surreg}
Use any algorithm to determine $P_t$. Then for every probability distribution $Q(\eta,k)$:
\[\mathbb{E}_Q[\eta R_T^k] \leq \mathbb{E}_Q[\eta^2 V_T^k] + S_T^Q.\]
\end{lemma}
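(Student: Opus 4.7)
The plan is to unpack the definition of $S_T^Q$ until the target inequality reduces to a statement about a per-round mix-loss quantity, and then to dispatch that statement using the prescribed reduction $\mathbf{w}_t = \mathbb{E}_{P_t}[\eta\mathbf{e}_k]/\mathbb{E}_{P_t}[\eta]$ together with a standard second-order exponential inequality.

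First I would compute $\sum_{t=1}^T\hat{f}_t(\eta,k)=-\eta R_T^k+\eta^2 V_T^k$ directly from the definition $\hat{f}_t(\eta,k)=-\eta r_t^k+\eta^2(r_t^k)^2$. Plugging this into (\ref{eq:Sreg}) and rearranging, the claimed inequality $\mathbb{E}_Q[\eta R_T^k]\leq\mathbb{E}_Q[\eta^2 V_T^k]+S_T^Q$ is equivalent to
\[
\sum_{t=1}^T\ln\mathbb{E}_{P_t}\bigl[e^{-\hat{f}_t(\eta,k)}\bigr]\leq 0,
\]
and this will clearly hold if I can show the stronger per-round statement $\mathbb{E}_{P_t}\bigl[e^{-\hat{f}_t(\eta,k)}\bigr]\leq 1$ for every $t$.

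To prove the per-round bound, I would invoke the elementary inequality $e^{x-x^2}\leq 1+x$ valid for all $|x|\leq \tfrac{1}{2}$ (easily verified by comparing derivatives or truncating the power series). Setting $x=\eta r_t^k$, the restriction $\eta\in[0,\tfrac{1}{2}]$ combined with $r_t^k\in[-1,1]$ gives $|x|\leq\tfrac{1}{2}$, so
\[
e^{-\hat{f}_t(\eta,k)}=e^{\eta r_t^k-\eta^2(r_t^k)^2}\leq 1+\eta r_t^k.
\]
Taking expectations under $P_t$ leaves me to verify that $\mathbb{E}_{P_t}[\eta r_t^k]=0$. This is exactly the identity baked into the reduction from Squint to the surrogate task: using $r_t^k=\mathbf{w}_t^{\top}\mathbf{l}_t-l_t^k$ and the prescribed $\mathbf{w}_t=\mathbb{E}_{P_t}[\eta\mathbf{e}_k]/\mathbb{E}_{P_t}[\eta]$, one has $\mathbb{E}_{P_t}[\eta r_t^k]=\mathbb{E}_{P_t}[\eta]\,\mathbf{w}_t^{\top}\mathbf{l}_t-\mathbf{l}_t^{\top}\mathbb{E}_{P_t}[\eta\mathbf{e}_k]=0$, which closes the argument.

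The only step that requires any care is recognizing that $Q$ plays a purely passive role here: it contributes the linear term $\mathbb{E}_Q[\eta R_T^k]-\mathbb{E}_Q[\eta^2V_T^k]$, which is absorbed exactly into the $\mathbb{E}_Q[\sum_t\hat{f}_t]$ part of $S_T^Q$, so nothing needs to be proved about $Q$ at all. Thus the whole lemma reduces to the single-shot exponential inequality plus the centering identity $\mathbb{E}_{P_t}[\eta r_t^k]=0$, and the main (minor) obstacle is simply verifying $e^{x-x^2}\leq 1+x$ on $|x|\leq\tfrac{1}{2}$ and tracking that the range $\eta\in[0,\tfrac{1}{2}]$ and $r_t^k\in[-1,1]$ is exactly what is needed to apply it.
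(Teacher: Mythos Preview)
Your proposal is correct and follows essentially the same route as the paper: reduce the claim to the per-round inequality $\mathbb{E}_{P_t}[e^{-\hat{f}_t(\eta,k)}]\leq 1$, apply $e^{x-x^2}\leq 1+x$ with $x=\eta r_t^k\in[-\tfrac12,\tfrac12]$, and use the centering identity $\mathbb{E}_{P_t}[\eta r_t^k]=0$ coming from the reduction $\mathbf{w}_t=\mathbb{E}_{P_t}[\eta\mathbf{e}_k]/\mathbb{E}_{P_t}[\eta]$. The paper phrases the same argument via $m_t:=-\ln\mathbb{E}_{P_t}[e^{-\hat{f}_t}]\geq 0$ and cites the inequality on the slightly larger range $x\geq-\tfrac12$, but there is no substantive difference.
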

\noindent
We will give the proof of this lemma at the end of this section. First, we want to show how helpful it is by using it for proving Theorem \ref{thm:Rnochange}.
\begin{proof}[\textbf{Proof of Theorem \ref{thm:Rnochange}}]
Since the instantaneous regret $r_t^k$ is bounded by $1$, the statement is trivial for $T=1$. So we now assume that $T\geq2$, such that the expression of $\Gamma$ simplifies to $\Gamma=\big\{2^{-i}:i=1,2,\dots,\lceil ^2\log{\sqrt{T}}\rceil\big\}$ and thus $|\Gamma|=\lceil ^2\log{\sqrt{T}}\rceil$. We will use this property later on in the proof. Now take $Q = \delta_{\ddot{\eta}}\times\pi(\cdot|\mathcal{K})$ for some $\ddot{\eta}\in\Gamma$, $\delta$ the Dirac delta function and $\pi(\cdot|\mathcal{K})$ the prior $\pi(k)$ conditioned on $\mathcal{K}$. Then Lemma \ref{lem:surreg} yields
\[R_T^{\mathcal{K}} \leq \ddot{\eta}V_T^{\mathcal{K}}+\frac{1}{\ddot{\eta}}S_T^Q.\]
The surrogate regret is bounded by $S_T^Q \leq \text{KL}(Q||\gamma\times\pi)$ where $\text{KL}(\tilde{Q}||\tilde{P}) := \mathbb{E}_{\tilde{Q}}\ln\big(\frac{d\tilde{Q}}{d\tilde{P}}\big)$ is the Kullback-Leibler divergence of $\tilde{Q}$ from $\tilde{P}$ for any probability distributions $\tilde{Q}$ and $\tilde{P}$ on the same measurable space. This result is obtained by applying Lemma \ref{lem:Sreg}. This lemma is a general result, which can be used for any measurable loss function on any measurable space.
\begin{lemma}\label{lem:Sreg}
Let $\Theta$ be a measurable space and let $Q$ and $P_t$ be probability distributions on $\Theta$ where $Q$ is arbitrary and $P_t$ is determined according to the Exponential Weights algorithm with $\eta_{EW}=1$, a prior $\rho(\theta)$ and measurable losses $g_t(\theta)\in\mathbb{R}$ for $\theta\in\Theta$. Then the surrogate regret is bounded by
\begin{align*}
S_T^Q & \leq \text{\normalfont{KL}}(Q||\rho).
\end{align*}
\end{lemma}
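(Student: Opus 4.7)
The plan is to unwind the definition of the surrogate regret using the specific structure of Exponential Weights, and then recognize the resulting gap as a Kullback–Leibler divergence, which is nonnegative.

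First I would introduce the normalizing constants $Z_{t+1} = \int_{\Theta} e^{-\sum_{s=1}^t g_s(\theta)}\,d\rho(\theta)$ from Definition \ref{def:EW}, and set $Z_1 := 1$ (so that $P_1 = \rho$). The key observation is that, under EW with $\eta_{EW}=1$,
\[
\mathbb{E}_{P_t}\bigl[e^{-g_t(\theta)}\bigr] \;=\; \int_\Theta e^{-g_t(\theta)} \frac{e^{-\sum_{s=1}^{t-1}g_s(\theta)}}{Z_t}\,d\rho(\theta) \;=\; \frac{Z_{t+1}}{Z_t},
\]
so that $-\ln\mathbb{E}_{P_t}[e^{-g_t}] = \ln Z_t - \ln Z_{t+1}$. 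Summing over $t$ telescopes to $-\ln Z_{T+1}$, giving the clean identity $S_T^Q = -\ln Z_{T+1} - \mathbb{E}_Q\bigl[\sum_{t=1}^T g_t(\theta)\bigr]$.

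Next I would compute the difference $\mathrm{KL}(Q\|\rho) - S_T^Q$. Using the EW density, $\frac{d\rho}{dP_{T+1}}(\theta) = Z_{T+1}\,e^{\sum_{s=1}^T g_s(\theta)}$, hence $\frac{dQ}{dP_{T+1}} = \frac{dQ}{d\rho}\cdot Z_{T+1}\,e^{\sum g_s}$ (assuming $Q \ll \rho$, otherwise both sides are infinite and there is nothing to prove). Taking $\mathbb{E}_Q[\ln(\cdot)]$ yields
\[
\mathrm{KL}(Q\|\rho) - S_T^Q \;=\; \mathbb{E}_Q\Bigl[\ln\frac{dQ}{d\rho}\Bigr] + \ln Z_{T+1} + \mathbb{E}_Q\Bigl[\sum_{t=1}^T g_t(\theta)\Bigr] \;=\; \mathrm{KL}(Q\|P_{T+1}).
\]
Since relative entropy is always nonnegative (by Jensen's inequality applied to $-\ln$), the claimed bound $S_T^Q \leq \mathrm{KL}(Q\|\rho)$ follows.

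I do not anticipate a genuine obstacle: the argument is pure bookkeeping, with the only delicate point being the careful manipulation of Radon–Nikodym derivatives when $\Theta$ is a general measurable space (in particular handling the case $Q \not\ll \rho$ separately, where $\mathrm{KL}(Q\|\rho)=+\infty$ and the inequality is vacuous). The elegance of the proof is that the gap $\mathrm{KL}(Q\|\rho)-S_T^Q$ equals \emph{exactly} $\mathrm{KL}(Q\|P_{T+1})$, so tightness is controlled by how close $Q$ is to the EW posterior after $T$ rounds.
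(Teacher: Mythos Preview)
Your proof is correct and follows essentially the same route as the paper: both derive the telescoping identity $S_T^Q = -\ln Z_{T+1} - \mathbb{E}_Q[\sum_t g_t(\theta)] = -\ln\mathbb{E}_\rho[e^{-\sum_t g_t}] - \mathbb{E}_Q[\sum_t g_t]$ and then bound this by $\mathrm{KL}(Q\|\rho)$. The only difference is packaging: the paper invokes the Donsker--Varadhan inequality (its Lemma~\ref{lem:DoVa}) as a black box at that last step, whereas you effectively prove the relevant instance inline by computing the gap and identifying it as $\mathrm{KL}(Q\|P_{T+1})\geq 0$. Your version is slightly more informative, since it makes the tightness of the bound explicit (equality iff $Q=P_{T+1}$); the paper's version keeps the variational lemma reusable elsewhere.
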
\noindent
For the proof of this lemma, we will use Lemma \ref{lem:DoVa}, which is originally from Donsker and Varadhan. The proof of Lemma \ref{lem:DoVa} is given in \cite{CompLem}.
\begin{lemma}[{Donsker-Varadhan Lemma, \cite[Lemma 1]{CompLem}}]\label{lem:DoVa}
For any measurable function $\phi:\Theta\rightarrow\mathbb{R}$ and any distributions $P$ and $Q$ on $\Theta$, we have
\[\mathbb{E}_Q[\phi(\theta)]-\ln\mathbb{E}_P[e^{\phi(\theta)}]\leq \text{\normalfont{KL}}(Q||P).\]
\end{lemma}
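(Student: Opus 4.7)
The plan is to prove the Donsker--Varadhan inequality by a change-of-measure (``tilting'') argument that reduces the claim to the non-negativity of Kullback--Leibler divergence (Gibbs' inequality, $\text{KL}(\cdot\|\cdot)\geq 0$). The key device is to introduce a carefully chosen auxiliary distribution $P_\phi$ built from $P$ and $\phi$, and to expand $\text{KL}(Q\|P_\phi)$ as the difference of the two sides of the inequality.

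First I would dispose of the degenerate cases. If $Q$ is not absolutely continuous with respect to $P$ then $\text{KL}(Q\|P) = +\infty$ and the inequality is immediate; similarly, if $Z := \mathbb{E}_P[e^{\phi(\theta)}] = +\infty$ then $-\ln Z = -\infty$ and (interpreting in the extended reals) the left-hand side is $-\infty$. So I may assume $Q\ll P$ and $0<Z<\infty$. Then define the tilted distribution $P_\phi$ on $\Theta$ by the Radon--Nikodym derivative $\frac{dP_\phi}{dP}(\theta) = e^{\phi(\theta)}/Z$, which is a valid probability measure by the choice of normalizer $Z$. Since $P_\phi$ is equivalent to $P$, we also have $Q\ll P_\phi$ with
\[\frac{dQ}{dP_\phi}(\theta) \;=\; \frac{dQ}{dP}(\theta)\cdot\frac{Z}{e^{\phi(\theta)}}.\]

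The key computation is to take logarithms and expectations under $Q$ on both sides of this identity, which yields the chain-rule decomposition
\[\text{KL}(Q\|P_\phi) \;=\; \text{KL}(Q\|P) - \mathbb{E}_Q[\phi(\theta)] + \ln Z.\]
Applying Gibbs' inequality $\text{KL}(Q\|P_\phi)\geq 0$ and rearranging gives exactly the claimed bound. The main obstacle I expect is a technical one: justifying that $\mathbb{E}_Q[\phi(\theta)]$ is well-defined as an extended real number so that the decomposition above does not produce an indeterminate form $\infty-\infty$. In the regime where $\text{KL}(Q\|P)<\infty$ and $Z<\infty$, this can be handled by a standard truncation argument (replacing $\phi$ by $\phi\wedge N$, applying the inequality, and passing to the limit via monotone convergence), so the truly substantive content of the proof is the very short tilting identity above; everything else is bookkeeping.
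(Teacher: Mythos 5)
The paper does not prove this lemma itself---it cites Banerjee's reference for the proof---and your argument via the tilted (Gibbs) measure $dP_\phi = e^{\phi}\,dP/Z$ followed by the non-negativity of $\mathrm{KL}(Q\|P_\phi)$ is exactly the standard proof given in that source, so your proposal is correct and matches the intended approach. The integrability caveats you flag (truncation to avoid $\infty-\infty$) are handled appropriately and do not hide any gap.
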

\begin{proof}[\textbf{Proof of Lemma \ref{lem:Sreg}}]
\begin{align*}
S_T^Q & = -\sum_{t=1}^T\ln\mathbb{E}_{P_t}\big[e^{-g_t(\theta)}\big] - \mathbb{E}_Q\Big[\sum_{t=1}^Tg_t(\theta)\Big] \\
 & = -\sum_{t=1}^T\ln\Big(\frac{Z_{t+1}}{Z_t}\Big) - \mathbb{E}_Q\Big[\sum_{t=1}^Tg_t(\theta)\Big] \\
 & = -\ln(Z_{T+1})+\ln(Z_1) - \mathbb{E}_Q\Big[\sum_{t=1}^Tg_t(\theta)\Big] \\
 & = -\ln(Z_{T+1}) - \mathbb{E}_Q\Big[\sum_{t=1}^Tg_t(\theta)\Big] \\
 & = -\ln\mathbb{E}_{\rho(\theta)}\big[e^{-\sum_{t=1}^Tg_t(\theta)}\big] - \mathbb{E}_Q\Big[\sum_{t=1}^Tg_t(\theta)\Big]
\end{align*}
We now apply Lemma \ref{lem:DoVa} and substitute $\phi(\theta) = -\sum_{t=1}^Tg_t(\theta)$ and $P=\rho$ to obtain the result
\[S_T^Q \leq \text{KL}(Q||\rho).\]
\end{proof}\noindent
This is a general result for any measurable loss function $g_t(\theta)$ on any measurable space $\Theta$ with prior $\rho(\theta)$. In the next chapter, we will use this result again. For now, we will apply it to the proof of Theorem \ref{thm:Rnochange} by substituting $\Theta = [0,\frac{1}{2}]\times\{1,2,\dots,K\}$, $\rho=\gamma\times\pi$ and $g_t(\theta) = \hat{f}_t(\eta,k)$. This yields the desired inequality:
\[S_T^Q \leq \text{KL}(Q||\gamma\times\pi)\]
Remember that $\pi$ is arbitrary, $\gamma$ is the uniform distribution on $\Gamma$ and $Q = \delta_{\ddot{\eta}}\times\pi(\cdot|\mathcal{K})$. Since we assumed that $T\geq2$, we have $|\Gamma|=\lceil ^2\log{\sqrt{T}}\rceil$. Now we can write out the Kullback-Leibler divergence of $Q$ from $\gamma\times\pi$ as follows:
\begin{align*}
\text{KL}(Q||\gamma\times\pi) & = \mathbb{E}_Q\ln\Big(\frac{dQ}{d(\gamma\times\pi)}\Big) \\
 & = \sum_{k\in\mathcal{K}}Q(\ddot{\eta},k)\ln\Big(\frac{Q(\ddot{\eta},k)}{\gamma({\ddot{\eta}})\cdot\pi(k)}\Big) \\
 & = \sum_{k\in\mathcal{K}}\pi(k|\mathcal{K})\ln\Big(\frac{\lceil ^2\log{\sqrt{T}}\rceil\cdot\pi(k|\mathcal{K})}{\pi(k)}\Big) \\
 & = \sum_{k\in\mathcal{K}}\frac{\pi(k)}{\pi(\mathcal{K})}\ln\Big(\frac{\lceil ^2\log{\sqrt{T}}\rceil\cdot\pi(k)}{\pi(k)\cdot\pi(\mathcal{K})}\Big) \\
 & = \ln\lceil ^2\log{\sqrt{T}}\rceil-\ln\pi(\mathcal{K}) \\
 & \leq A_T^{\mathcal{K}}.
\end{align*}
Combining these inequalities gives
\[S_T^Q \leq \text{KL}(Q||\gamma\times\pi) \leq A_T^{\mathcal{K}}.\]
Implementing this in the initial inequality yields
\[R_T^{\mathcal{K}} \leq \ddot{\eta}V_T^{\mathcal{K}}+\frac{1}{\ddot{\eta}}A_T^{\mathcal{K}}.\]
We would like to minimize the bound over $\ddot{\eta}$, such that we have the best possible bound. Unfortunately though, $\ddot{\eta}$ is an element of the discrete set $\Gamma$ and we want to perform continuous optimization. However, by the definition of $\Gamma$ we conclude that for every $\hat{\eta}\in\big[\frac{1}{2\sqrt{T}},\frac{1}{2}\big]$ there exists a $\ddot{\eta}\in\Gamma$ which is within a factor 2 of $\hat{\eta}$ (i.e.\ $\hat{\eta}\leq\ddot{\eta}\leq2\hat{\eta}$). This gives the inequality
\[R_T^{\mathcal{K}} \leq 2\hat{\eta}V_T^{\mathcal{K}}+\frac{1}{\hat{\eta}}A_T^{\mathcal{K}}.\]
Continuous optimization of this bound using the first and second derivative then yields a minimum for
\[\hat{\eta}_{min} = \sqrt{\frac{A_T^{\mathcal{K}}}{2V_T^{\mathcal{K}}}}.\]
We still have to check whether this minimum is reached on the interval $\big[\frac{1}{2\sqrt{T}},\frac{1}{2}\big]$, so we distinguish three different possibilities:
\begin{enumerate}
\item $\hat{\eta}_{min}<\frac{1}{2\sqrt{T}}$: Substitution gives $\frac{A_T^{\mathcal{K}}}{2V_T^{\mathcal{K}}}<\frac{1}{4T}$. Since the minimum of $A_T^{\mathcal{K}}$ is 1 and $V_T^{\mathcal{K}}$ does not exceed $T$, we find $\frac{1}{2T}\leq\frac{A_T^{\mathcal{K}}}{2V_T^{\mathcal{K}}}<\frac{1}{4T}$, which is impossible. Hence, this case can be excluded.
\item $\frac{1}{2\sqrt{T}}\leq\hat{\eta}_{min}<\frac{1}{2}$: Since $\hat{\eta}_{min}$ is an element of the interval, we can substitute it in the inequality:
\begin{align*}
R_T^{\mathcal{K}} & \leq 2\hat{\eta}_{min}V_T^{\mathcal{K}} + \frac{1}{\hat{\eta}_{min}}A_T^{\mathcal{K}} \\
 & = 2\sqrt{\frac{A_T^{\mathcal{K}}}{2V_T^{\mathcal{K}}}}\cdot V_T^{\mathcal{K}} + \sqrt{\frac{2V_T^{\mathcal{K}}}{A_T^{\mathcal{K}}}}\cdot A_T^{\mathcal{K}} \\
 & = 2\sqrt{2V_T^{\mathcal{K}}A_T^{\mathcal{K}}}
\end{align*}
\item $\hat{\eta}_{min}\geq\frac{1}{2}$: Substitution gives $\frac{A_T^{\mathcal{K}}}{2V_T^{\mathcal{K}}} \geq \frac{1}{4}$ and thus $V_T^{\mathcal{K}}\leq2A_T^{\mathcal{K}}$. Since the first derivative of the bound $2\hat{\eta}V_T^{\mathcal{K}}+\frac{1}{\hat{\eta}}A_T^{\mathcal{K}}$ now is non-positive for all $\hat{\eta}\in(0,\frac{1}{2}]$, we conclude that the minimum of the bound on $\big[\frac{1}{2\sqrt{T}},\frac{1}{2}\big]$ is obtained for $\hat{\eta}_{min}=\frac{1}{2}$. This gives
\begin{align*}
R_T^{\mathcal{K}} & \leq 2\hat{\eta}_{min}V_T^{\mathcal{K}} + \frac{1}{\hat{\eta}_{min}}A_T^{\mathcal{K}} \\
 & = V_T^{\mathcal{K}} + 2A_T^{\mathcal{K}} \\
 & \leq 4A_T^{\mathcal{K}}.
\end{align*}
\end{enumerate}
So in all cases we find that
\[R_T^{\mathcal{K}} \leq 2\sqrt{2V_T^{\mathcal{K}}A_T^{\mathcal{K}}} + 4A_T^{\mathcal{K}}\]
which is the initial statement, we wanted to prove.
\end{proof}
\noindent
Now, to complete the proof of Theorem \ref{thm:Rnochange}, we will give the proof of Lemma \ref{lem:surreg}.
\begin{proof}[\textbf{Proof of Lemma \ref{lem:surreg}}]
Define $m_t := L(\hat{f}_t,P_t) = -\ln{\mathbb{E}_{P_t}\big[e^{-\hat{f}_t(\eta,k)}\big]}$ for every $t\in\{1,2,\dots,T\}$. Then
\begin{align*}
e^{-m_t} & = \mathbb{E}_{P_t}\big[e^{-\hat{f}_t(\eta,k)}\big] \\
 & = \mathbb{E}_{P_t}\big[e^{\eta r_t^k-\eta^2(r_t^k)^2}\big].
\end{align*}
Since $e^{x-x^2}\leq1+x$ for $x\geq-\frac{1}{2}$ \cite[Lemma 1]{Ineq} and $\eta r_t^k\in\big[-\frac{1}{2},\frac{1}{2}\big]$, we obtain
\begin{align*}
e^{-m_t} & \leq \mathbb{E}_{P_t}[1+\eta r_t^k] \\
 & = 1 + \mathbb{E}_{P_t}[\eta]\cdot\mathbf{w}_t^{\top}\mathbf{l}_t-\mathbb{E}_{P_t}[\eta\cdot l_t^k] \\
 & = 1 + \mathbb{E}_{P_t}[\eta]\cdot\Big(\frac{\mathbb{E}_{P_t}[\eta\mathbf{e}_k]}{\mathbb{E}_{P_t}[\eta]}\Big)^{\top}\mathbf{l}_t-\mathbb{E}_{P_t}[\eta\cdot l_t^k] \\
 & = 1 + \mathbb{E}_{P_t}[\eta\cdot l_t^k] - \mathbb{E}_{P_t}[\eta\cdot l_t^k] \\
 & = 1.
\end{align*}
From this, we conclude that $m_t\geq0$ for every $t$. Using (\ref{eq:Sreg}), the relevant expression of $S_T^Q$, we find
\begin{align*}
0 & \leq \sum_{t=1}^Tm_t \\
 & = -\sum_{t=1}^T \ln{\mathbb{E}_{P_t}\big[e^{-\hat{f}_t(\eta,k)}\big]} \\
 & = S_T^Q + \mathbb{E}_Q\big[\sum_{t=1}^T\hat{f}_t(\eta,k)\big] \\
 & = S_T^Q + \mathbb{E}_Q[-\eta R_T^k + \eta^2 V_T^k].
\end{align*}
Hence, we obtain the inequality
\[\mathbb{E}_Q[\eta R_T^k] \leq \mathbb{E}_Q[\eta^2V_T^k]+S_T^Q.\]
\end{proof}
\noindent
In conclusion, we have proved Theorem \ref{thm:Rnochange} and thus the bound on Squint. In the next chapter we will present an adjustment of Squint such that it will function well in a changing environment. We will combine techniques from this chapter with ingredients from the previous chapter to find a new bound.

\newpage
\section{Squint in a Changing Environment}\label{sec:5}
We present our own algorithm, named Squint-CE, which is based on Squint, but now yields a favorable bound on the regret in a changing environment. The pseudocode for Squint-CE is given by Algorithm \ref{alg:SCE}. Theorem \ref{thm:Rchange} provides the corresponding bound. In this theorem we use the following aforementioned definitions: $R_I^{\mathcal{K}}:=\mathbb{E}_{\pi(k|\mathcal{K})}R_I^k$ with $R_I^k:=\sum_{t=I_1}^{I_2}r_t^k$ and $V_I^{\mathcal{K}}:=\mathbb{E}_{\pi(k|\mathcal{K})}V_I^k$ with $V_I^k:=\sum_{t=I_1}^{I_2}(r_t^k)^2$.
\begin{theorem}
\label{thm:Rchange}
Let $\mathcal{B} = \{b_J:J=[J_1,J_2]\in\mathcal{J}\text{ with }J_2\leq T\}$ be the set of black-box algorithms operating solely on a geometric covering interval with endpoint not larger than $T$. Furthermore, let $\tau(b)$ be the uniform distribution on $\mathcal{B}$, let $\gamma(\eta)$ be the uniform distribution on $\Gamma=\big\{\frac{1}{2}\big\}\cup\big\{2^{-i}:i=1,2,\dots,\lceil ^2\log{\sqrt{T}}\rceil\big\}$ and let $\pi(k)$ be arbitrary. Then for any contiguous interval $I=[I_1,I_2]\subseteq\{1,2,\dots,T\}$, Algorithm \ref{alg:SCE} yields
\[R_I^{\mathcal{K}}\leq2\sqrt{2V_I^{\mathcal{K}}A_I^{\mathcal{K}}}+4A_I^{\mathcal{K}}\]
with
\[A_I^{\mathcal{K}} := \Big(2\cdot {}^2\log\big(|I|+2\big) \cdot \big(\ln(2T)+\ln\lceil^2\log\sqrt{T}\rceil-\ln\pi(\mathcal{K})\big)\Big) \vee 1\]
where $x \vee y := \max\{x,y\}$.
\end{theorem}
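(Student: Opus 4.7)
The plan is to mirror the proof of Theorem \ref{thm:Rnochange}, enlarging the surrogate task so that the parameter space also ranges over the geometric covering intervals. Concretely, I would interpret Squint-CE as Exponential Weights on $\Theta = \Gamma \times \{1,\ldots,K\} \times \mathcal{B}$ with prior $\gamma(\eta) \times \pi(k) \times \tau(b)$ and surrogate loss
\[
\hat{f}_t(\eta,k,b) \;=\; \mathbf{1}\{t\in J_b\}\bigl(-\eta r_t^k + \eta^2 (r_t^k)^2\bigr),
\]
so that inactive black-boxes contribute zero at each time and $\sum_{t=1}^T \hat{f}_t(\eta,k,b) = -\eta R_{J_b}^k + \eta^2 V_{J_b}^k$. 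With this reduction, Lemma \ref{lem:Sreg} applies verbatim (since $P_t$ still arises from Exponential Weights) to give $S_T^Q \leq \text{KL}(Q \| \gamma\times\pi\times\tau)$ for any distribution $Q$ on $\Theta$.

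Next I would establish a Squint-CE analog of Lemma \ref{lem:surreg}: the inequality $e^{x-x^2} \leq 1+x$ for $x \geq -\tfrac12$, combined with the Squint-CE weight-vector identity (which should be designed so that $\mathbf{w}_t^{\top}\mathbf{l}_t\cdot\mathbb{E}_{P_t}[\eta\mathbf{1}\{t\in J_b\}] = \mathbb{E}_{P_t}[\eta\mathbf{1}\{t\in J_b\}\,l_t^k]$), gives $m_t := -\ln\mathbb{E}_{P_t}[e^{-\hat{f}_t}] \geq 0$. Summing over $t$ and using the identity for $\sum_t \hat{f}_t$ yields
\[
\mathbb{E}_Q\bigl[\eta R_{J_b}^k\bigr] \;\leq\; \mathbb{E}_Q\bigl[\eta^2 V_{J_b}^k\bigr] + S_T^Q
\]
for every $Q$ on $\Theta$.

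Now fix the target interval $I \subseteq [1,T]$ and apply Lemma \ref{lem:covint} to partition $I$ into consecutive covering intervals $J^{(-c)},\ldots,J^{(d)}$ with $N := c+d+1 \leq 2\cdot{}^2\log(|I|+2)$, as already established in Section \ref{subsec:CBCESquint}. I would choose
\[
Q \;=\; \delta_{\ddot{\eta}} \,\times\, \pi(\cdot|\mathcal{K}) \,\times\, \text{Unif}\bigl\{b_{J^{(-c)}},\ldots,b_{J^{(d)}}\bigr\}
\]
for some $\ddot{\eta}\in\Gamma$. Because the $J^{(i)}$ partition $I$, the left-hand side of the surrogate inequality collapses to $\ddot{\eta}\,R_I^{\mathcal{K}}/N$ and the variance term to $\ddot{\eta}^2\,V_I^{\mathcal{K}}/N$, so multiplying through by $N/\ddot{\eta}$ gives $R_I^{\mathcal{K}} \leq \ddot{\eta}V_I^{\mathcal{K}} + N\cdot S_T^Q/\ddot{\eta}$. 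The KL divergence factorises across the three coordinates as $\ln|\Gamma| - \ln\pi(\mathcal{K}) + \ln|\mathcal{B}| - \ln N$; using $|\Gamma| = \lceil{}^2\log\sqrt{T}\rceil$ and $|\mathcal{B}| \leq 2T$ (from $\sum_{n\geq 0} T/2^n \leq 2T$), discarding the favourable $-\ln N$ term and multiplying by $N \leq 2\cdot{}^2\log(|I|+2)$ produces exactly $A_I^{\mathcal{K}}$.

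The final step is the $\eta$-optimisation argument of Theorem \ref{thm:Rnochange}, applied verbatim: for any $\hat{\eta}\in[\tfrac{1}{2\sqrt{T}},\tfrac12]$ some $\ddot{\eta}\in\Gamma$ lies within a factor of $2$, so $R_I^{\mathcal{K}} \leq 2\hat{\eta}V_I^{\mathcal{K}} + A_I^{\mathcal{K}}/\hat{\eta}$, and the three-case analysis ($\hat{\eta}_{\min} < \tfrac{1}{2\sqrt{T}}$, interior, or $\geq\tfrac12$) closes out the bound $2\sqrt{2V_I^{\mathcal{K}}A_I^{\mathcal{K}}} + 4A_I^{\mathcal{K}}$. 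The main obstacle is verifying that the Squint-CE update of Algorithm \ref{alg:SCE} really coincides with the enlarged-space Exponential Weights reduction above, i.e.\ that its weight vector $\mathbf{w}_t$ implements the conditional-expectation identity over active black-boxes needed for $m_t\geq 0$; once this is pinned down, every other ingredient reuses machinery already developed in Chapter \ref{sec:4}.
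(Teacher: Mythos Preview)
Your plan is coherent, but the obstacle you flag at the end is genuine and does not resolve in your favour. Algorithm~\ref{alg:SCE} is \emph{not} Exponential Weights on $\Gamma\times\{1,\dots,K\}\times\mathcal{B}$ with your surrogate loss $\hat f_t(\eta,k,b)=\mathbf{1}\{t\in J_b\}\bigl(-\eta r_t^k+\eta^2(r_t^k)^2\bigr)$. In Squint-CE the meta weights satisfy $\tilde q_t^b\propto e^{-G_{t-1}^b}\tau(b)$ with $G_{t-1}^b=\sum_{s<t}g_s(b)$, and an \emph{inactive} black-box is assigned loss $g_s(b)=\hat g_s$ (the learner's own mix loss), not $0$. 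Unwinding this, the joint density Algorithm~\ref{alg:SCE} implicitly plays on $(\eta,k,b)$ differs from your enlarged-space EW posterior by the $b$-dependent factor $\exp\bigl(-\sum_{s\notin J_b,\,s<t}\hat g_s\bigr)$. Hence your $P_t$ does not match the algorithm, Lemma~\ref{lem:Sreg} cannot be invoked for it, and the conditional-expectation identity you need for $m_t\ge 0$ holds for the \emph{wrong} weight vector. Your argument would prove the bound for a different (and arguably simpler) algorithm, but not for Algorithm~\ref{alg:SCE} as stated in the theorem.

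The paper instead keeps the surrogate space as $\Gamma\times\{1,\dots,K\}$ and works in two layers. It first proves an interval analogue of Lemma~\ref{lem:surreg} (Lemma~\ref{lem:surregI}), giving $R_I^{\mathcal K}\le \ddot\eta V_I^{\mathcal K}+\ddot\eta^{-1}S_I^Q\bigl(\mathcal M(\mathcal B)\bigr)$ for $Q=\delta_{\ddot\eta}\times\pi(\cdot\mid\mathcal K)$, and then decomposes $S_I^Q\bigl(\mathcal M(\mathcal B)\bigr)=\sum_i\bigl(S_{J^{(i)}}^{b_{J^{(i)}}}(\mathcal M)+S_{J^{(i)}}^Q(b_{J^{(i)}})\bigr)$ along the geometric-covering partition of $I$. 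The role of setting $g_s(b)=\hat g_s$ for inactive $b$ is precisely the sleeping-expert trick: it makes $S_{J^{(i)}}^{b_{J^{(i)}}}(\mathcal M)=S_T^{b_{J^{(i)}}}(\mathcal M)$, so Lemma~\ref{lem:Sreg} applies at the \emph{meta} level over all of $[1,T]$ and yields $-\ln\tau(b_{J^{(i)}})=\ln|\mathcal B|$; the black-box piece is bounded by $\ln|\Gamma|-\ln\pi(\mathcal K)$ exactly as in Chapter~\ref{sec:4}. Summing over at most $2\cdot{}^2\log(|I|+2)$ pieces and running the same $\eta$-optimisation you describe finishes the proof. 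Your KL and $|\mathcal B|\le 2T$ computations are fine; what is missing is this sleeping-expert mechanism that ties the analysis to the actual updates of Algorithm~\ref{alg:SCE}.
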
\noindent
Theorem \ref{thm:Rchange} results in the bound
\begin{equation}\label{eq:RIKbbound}
\begin{aligned}
R_I^{\mathcal{K}} & \preccurlyeq \sqrt{\ln|I|\cdot V_I^{\mathcal{K}}\big(\ln T + \ln(\ln T)-\ln\pi(\mathcal{K})\big)} \\
 & \qquad + \ln|I|\cdot \big(\ln T + \ln(\ln T)-\ln\pi(\mathcal{K})\big).
\end{aligned}
\end{equation}\noindent
This is an improvement on the previous bound
\begin{equation}\tag{\ref{eq:RIKwbound}}
\begin{aligned}
R_I^{\mathcal{K}} & \preccurlyeq \sqrt{|I|\ln{I_2}} + \sqrt{\ln|I|\cdot V_I^{\mathcal{K}}\big(\ln(\ln{|I|})-\ln\pi(\mathcal{K})\big)}\\
& \qquad + \ln|I|\cdot\big(\ln(\ln|I|)-\ln\pi(\mathcal{K})\big)
\end{aligned}
\end{equation}\noindent
which was obtained by applying CBCE to Squint in Chapter \ref{sec:3}. Our new bound (\ref{eq:RIKbbound}) does not contain the dominating term $\sqrt{|I|\ln I_2}$, which made the advantage Squint had over Hedge vanish in a changing environment. The only cost we now pay is that we use $\ln(\ln T)$ instead of $\ln(\ln|I|)$ and a term $\ln T$ is added. Fortunately though, the difference between $\ln(\ln T)$ and $\ln(\ln|I|)$ is minimal as they both practically behave like small constants and can be neglected. Moreover, the term $\ln T$ is a price we are willing to pay, since it grows relatively slow.\par
In this chapter we will show how we created Squint-CE and prove Theorem \ref{thm:Rchange}. In Section \ref{subsec:newtau}, the final section of this chapter, we will present a different prior $\tau(b)$ which yields a slightly better bound.
\begin{algorithm}[!ht]
\caption{Squint-CE}\label{alg:SCE}
\begin{algorithmic}
\State \textbf{Input:} A set of black-box algorithms $\mathcal{B}$, a set of all active black-box algorithms $\mathcal{B}_t\subseteq\mathcal{B}$ for each round $t=1,2,\dots,T$, a prior distribution $\tau(b)$ on $\mathcal{B}$, a prior distribution $\gamma(\eta)$ on a discrete set $\Gamma\subset[0,\frac{1}{2}]$ and a prior distribution $\pi(k)$ on $\{1,2,\dots,K\}$.
\For{$b\in\mathcal{B}$}
	\State $G_0^b \gets 0$
	\State $R_0^k(b) \gets 0$ \textbf{for} $k\in\{1,2\dots,K\}$
	\State $V_0^k(b) \gets 0$ \textbf{for} $k\in\{1,2\dots,K\}$
	\State $F_0^b(\eta,k) \gets 0$ \textbf{for} $\eta\in\Gamma$ and $k\in\{1,2\dots,K\}$
\EndFor
\For{$t=1$ \textbf{to} $T$}
	\For{$\eta\in\Gamma$ and $k\in\{1,2,\dots,K\}$}
		\State $P_t^b(\eta,k) \gets \frac{e^{-F_{t-1}^b(\eta,k)}\gamma(\eta)\times\pi(k)}{\sum_{k=1}^K\sum_{\eta\in\Gamma}e^{-F_{t-1}^b(\eta,k)}\gamma(\eta)\times\pi(k)}$ \textbf{for} $b\in\mathcal{B}_t$
	\EndFor
	\State $\tilde{q}_t^b \gets \frac{e^{-G_{t-1}^b}\tau(b)}{\sum_{b'\in\mathcal{B}}e^{-G_{t-1}^{b'}}\tau(b')}$ \textbf{for} $b\in\mathcal{B}$
	\State $q_t^b \gets \left\{\begin{matrix} \frac{1}{\tilde{q}_{t}(\mathcal{B}_{t})}\cdot \tilde{q}_{t}^b & \textbf{for } b\in\mathcal{B}_{t} \\ 0 & \textbf{for } b\in\mathcal{B}\setminus\mathcal{B}_{t}\end{matrix}\right.$
	\State $\tau_{t}(b) \gets \left\{\begin{matrix} \frac{1}{\tau(\mathcal{B}_{t})}\cdot\tau(b)&& \textbf{for }b\in\mathcal{B}_{t} \\ 0 && \textbf{for } b\in\mathcal{B}\setminus\mathcal{B}_{t}\end{matrix}\right.$
	\State $\mathbf{w}_{t} \gets \frac{\mathbb{E}_{\tau_{t}(b)\gamma(\eta)\pi(k)}\big[e^{-G_{t-1}^b + \eta R_{t-1}^k(b)-\eta^2V_{t-1}^k(b)}\eta\mathbf{e}_k\big]}{\mathbb{E}_{\tau_{t}(b)\gamma(\eta)\pi(k)}\big[e^{-G_{t-1}^b + \eta R_{t-1}^k(b)-\eta^2V_{t-1}^k(b)}\eta\big]}$
	\State Obtain the losses $\mathbf{l}_{t}$ from the environment
	\For{$k\in\{1,2,\dots,K\}$}
		\State $r_{t}^k \gets \mathbf{w}_{t}^{\top}\mathbf{l}_t-l_t^k$
		\State $R_t^k(b) \gets \left\{\begin{matrix} R_{t-1}^k + r_t^k && \textbf{for } b\in\mathcal{B}_{t} \\ R_{t-1}^k && \textbf{for } b\in\mathcal{B}\setminus\mathcal{B}_{t}\end{matrix}\right.$
		\State $V_t^k(b) \gets \left\{\begin{matrix} V_{t-1}^k + (r_t^k)^2 && \textbf{for } b\in\mathcal{B}_{t} \\ V_{t-1}^k && \textbf{for } b\in\mathcal{B}\setminus\mathcal{B}_{t}\end{matrix}\right.$
		\For{$\eta\in\Gamma$}
			\State $\hat{f}_t(\eta,k) \gets -\eta r_t^k + \eta^2(r_t^k)^2$
			\State $F_t^b(\eta,k) \gets \left\{\begin{matrix} F_{t-1}^b(\eta,k) + \hat{f}_t(\eta,k) && \textbf{for } b\in\mathcal{B}_{t} \\ F_{t-1}^b(\eta,k) && \textbf{for } b\in\mathcal{B}\setminus\mathcal{B}_{t}\end{matrix}\right.$
		\EndFor
	\EndFor
	\State $g_t(b) \gets -\ln\mathbb{E}_{P_t^b}\big[e^{-\hat{f}_t(\eta,k)}\big]$ \textbf{for} $b\in\mathcal{B}_t$
	\State $\hat{g}_t \gets -\ln\mathbb{E}_{\mathbf{q}_t}\big[e^{-g_t(b)}\big]$
	\State $g_t(b) \gets \hat{g}_t$ \textbf{for} $b\in\mathcal{B}\setminus\mathcal{B}_{t}$
	\State $G_t^b \gets G_{t-1}^b+g_t(b)$ \textbf{for} $b\in\mathcal{B}$
\EndFor
\end{algorithmic}
\end{algorithm}

\subsection{Surrogate Task in a Changing Environment}
We again consider the surrogate task, introduced in Chapter \ref{sec:4}. So the aim is to learn the best learning rate $\eta\in[0,\frac{1}{2}]$ and the best expert $k\in\{1,2,\dots,K\}$. However, this time we are in a changing environment, which means that the best learning rate and the best expert can change over time. The learner will now use black-box algorithms and a meta algorithm as introduced in Chapter \ref{sec:3} to determine its probability distribution $P_t^{\mathcal{M}(\mathcal{B})}(\eta,k)$ on the learning rates and experts for each time step. We then again marginalize out the learning rate to obtain the weight vector for the original task:
\[\mathbf{w}_t=\frac{\mathbb{E}_{P_t^{\mathcal{M}(\mathcal{B})}}[\eta\mathbf{e}_k]}{\mathbb{E}_{P_t^{\mathcal{M}(\mathcal{B})}}[\eta]}\]
The black-box algorithms, used by the learner, will all run on different intervals and thus use only the data available on these intervals. They keep track of their own probability distributions $P_t^b(\eta,k)$ on the learning rates and experts. To determine these distributions they use Exponential Weights (\ref{eq:EW}) with $\eta_{EW}=1$ and prior $\gamma(\eta)\times\pi(k)$, just like in Chapter \ref{sec:4}. We again denote the set of used black-box algorithms by $\mathcal{B}$ with $\mathcal{B}_t\subseteq\mathcal{B}$ the set of active black-box algorithms at time $t$.\par
The meta algorithm will keep track of a probability distribution $\mathbf{q}_t$ on the black-boxes. To determine this distribution it will use Exponential Weights (see Definition \ref{def:EW}) on the black-boxes, again with $\eta_{EW}=1$ and with prior $\tau(b)$, after which it conditions on the set of active black-box algorithms, since it cannot pick any inactive $b$'s. The meta algorithm needs losses of the black-box algorithms in order to apply Exponential Weights. We denote these losses by $g_t(b)$ and set them equal to the mix losses $L(\hat{f}_t,P_t^b)$ where $\hat{f}_t(\eta,k) := -\eta r_t^k + \eta^2(r_t^k)^2$ is the surrogate loss obtained by the meta algorithm in combination with the black-box algorithms. When a black-box algorithm is not active at time $t$, it does not output a distribution $P_t^b$, so then we set $g_t(b)$ to be equal to some constant $c_t$, which only depends on time and not on $b$. We will define this constant later. So we have
\[g_t(b) := \left\{\begin{matrix}-\ln\mathbb{E}_{P_t^b}\big[e^{-\hat{f}_t(\eta,k)}\big] & \text{if } b\in\mathcal{B}_t \\
c_t & \text{if } b\notin\mathcal{B}_t\end{matrix}\right..\]
As mentioned before, to determine the probability vector $\mathbf{q}_t$ with components $q_t^b$ on the set $\mathcal{B}$, we first set up a probability vector $\tilde{\mathbf{q}}_t$ using Exponential Weights. This vector then has components
\[\tilde{q}_{t+1}^b = \frac{1}{Y_{t+1}}e^{-\sum_{s=1}^tg_s(b)}\tau(b)\]
where
\[Y_{t+1} = \sum_{b'\in\mathcal{B}}e^{-\sum_{s=1}^tg_s(b')}\tau(b')\]
is a normalization factor. Then we condition it on the active black-box algorithms to obtain $\mathbf{q}_t$:
\[q_{t}^b = \left\{\begin{matrix} \frac{1}{\tilde{q}_{t}(\mathcal{B}_{t})}\cdot \tilde{q}_{t}^b & \text{if } b\in\mathcal{B}_{t} \\ 0 & \text{if } b\notin\mathcal{B}_{t}\end{matrix}\right.\]
where $\tilde{q}_{t}(\mathcal{B}_{t})$ is the weight $\tilde{\mathbf{q}}_t$ puts on the set $\mathcal{B}_t$. Using this probability vector and the distributions $P_t^b$ of the black-box algorithms, the learner then plays the probability distribution
\[P_t^{\mathcal{M}(\mathcal{B})} = \mathbb{E}_{\mathbf{q}_t}\big[P_t^b\big]\]
on the learning rates and experts for each time step $t$. So then our new algorithm for the original task, Squint-CE, determines $\mathbf{w}_{t+1}$ by
\begin{align*}
\mathbf{w}_{t+1} & = \frac{\mathbb{E}_{P_{t+1}^{\mathcal{M}(\mathcal{B})}}[\eta\mathbf{e}_k]}{\mathbb{E}_{P_{t+1}^{\mathcal{M}(\mathcal{B})}}[\eta]} \\
 & = \frac{\mathbb{E}_{\mathbf{q}_{t+1}}\big[\mathbb{E}_{P_{t+1}^{b}}[\eta\mathbf{e}_k]\big]}{\mathbb{E}_{\mathbf{q}_{t+1}}\big[\mathbb{E}_{P_{t+1}^{b}}[\eta]\big]} \\
 & = \frac{\mathbb{E}_{\mathbf{q}_{t+1}}\Big[\mathbb{E}_{\gamma(\eta)\pi(k)}\big[e^{-\sum_{s=J_1^b}^t\hat{f}_s(\eta,k)}\eta\mathbf{e}_k\big]\Big]}{\mathbb{E}_{\mathbf{q}_{t+1}}\Big[\mathbb{E}_{\gamma(\eta)\pi(k)}\big[e^{-\sum_{s=J_1^b}^t\hat{f}_s(\eta,k)}\eta\big]\Big]} \\
 & = \frac{\mathbb{E}_{\mathbf{q}_{t+1}}\Big[\mathbb{E}_{\gamma(\eta)\pi(k)}\big[e^{\eta R_{[J_1^b,t]}^k-\eta^2V_{[J_1^b,t]}^k}\eta\mathbf{e}_k\big]\Big]}{\mathbb{E}_{\mathbf{q}_{t+1}}\Big[\mathbb{E}_{\gamma(\eta)\pi(k)}\big[e^{\eta R_{[J_1^b,t]}^k-\eta^2V_{[J_1^b,t]}^k}\eta\big]\Big]} \\
 & = \frac{\mathbb{E}_{\tau_{t+1}(b)\gamma(\eta)\pi(k)}\Big[e^{-\sum_{s=1}^t g_s(b) + \eta R_{[J_1^b,t]}^k-\eta^2V_{[J_1^b,t]}^k}\eta\mathbf{e}_k\Big]}{\mathbb{E}_{\tau_{t+1}(b)\gamma(\eta)\pi(k)}\Big[e^{-\sum_{s=1}^t g_s(b) + \eta R_{[J_1^b,t]}^k-\eta^2V_{[J_1^b,t]}^k}\eta\Big]} \\
 & = \frac{\mathbb{E}_{\tau_{t+1}(b)\gamma(\eta)\pi(k)}\Big[e^{-G_t^{b} + \eta R_{[J_1^b,t]}^k-\eta^2V_{[J_1^b,t]}^k}\eta\mathbf{e}_k\Big]}{\mathbb{E}_{\tau_{t+1}(b)\gamma(\eta)\pi(k)}\Big[e^{-G_t^{b} + \eta R_{[J_1^b,t]}^k-\eta^2V_{[J_1^b,t]}^k}\eta\Big]}
\end{align*}
where $J_1^b$ is the starting point of the interval on which black-box $b$ is active, $G_T^b:=\sum_{t=1}^T g_t(b)$ is the total loss of black-box $b$ up until time $T$ and
\[\tau_t(b) = \left\{\begin{matrix}\frac{1}{\tau(\mathcal{B}_t)}\cdot\tau(b) & \text{if } b\in\mathcal{B}_t \\
0 & \text{if } b\notin\mathcal{B}_t\end{matrix}\right.\]
is the prior $\tau(b)$ conditioned on the active black-box algorithms at time $t$. Now that we know how the algorithm computes the weight vector for the original task, we will work towards the bound on the regret. For this, we will first look into the loss of the learner.

\subsection{Loss of the Learner}
We define the loss of the learner $\hat{g}_t$ at time $t$ to be the mix loss of $g_t(b)$ under the distribution $\mathbf{q}_t$:
\[\hat{g}_t := L(g_t, \mathbf{q}_t) = -\ln\mathbb{E}_{\mathbf{q}_t}\big[e^{-g_t(b)}\big]\]
In order to make the analysis of the regret easier, we will now present an equivalence of the learner's loss, as similarly done by Adamskiy et al.\ \cite{EquivLoss}. For this, we use the unconditioned probability vector $\tilde{\mathbf{q}}_t$. Moreover, we now define
\[c_t := \hat{g}_t.\]
So if $b$ is active, its loss $g_t(b)$ will be equal to the mix loss we had before, but when $b$ is not active, its loss is equal to the learner's loss. Since $\hat{g}_t$ is determined using only the losses of active black-boxes, this is a well defined choice for $g_t(b)$. Now the loss of the learner can be rewritten as
\[\hat{g}_t = -\ln\mathbb{E}_{\tilde{\mathbf{q}}_t}\big[e^{-g_t(b)}\big].\]
We will prove this equality directly. Note that for all $b\in\mathcal{B}_t$ we have $\tilde{q}_t^b = \tilde{q}_t(\mathcal{B}_t)\cdot q_t^b$. Then
\begin{align*}
-\ln\mathbb{E}_{\tilde{\mathbf{q}}_t}\big[e^{-g_t(b)}\big] & = -\ln\Big(\tilde{q}_t(\mathcal{B}_t)\cdot\mathbb{E}_{\mathbf{q}_t}\big[e^{-g_t(b)}\big] + \big(1-\tilde{q}_t(\mathcal{B}_t)\big)\cdot e^{-\hat{g}_t}\Big) \\
 & = -\ln\Big(\tilde{q}_t(\mathcal{B}_t)\cdot\mathbb{E}_{\mathbf{q}_t}\big[e^{-g_t(b)}\big] + \big(1-\tilde{q}_t(\mathcal{B}_t)\big)\cdot e^{\ln\mathbb{E}_{\mathbf{q}_t}[e^{-g_t(b)}]}\Big) \\
 & = -\ln\Big(\tilde{q}_t(\mathcal{B}_t)\cdot\mathbb{E}_{\mathbf{q}_t}\big[e^{-g_t(b)}\big] + \big(1-\tilde{q}_t(\mathcal{B}_t)\big)\cdot \mathbb{E}_{\mathbf{q}_t}\big[e^{-g_t(b)}\big]\Big) \\
 & = -\ln\mathbb{E}_{\mathbf{q}_t}\big[e^{-g_t(b)}\big] \\
 & = \hat{g}_t.
\end{align*}
We conclude that the loss of the learner, $\hat{g}_t$, is the same under the two probability distributions $\mathbf{q}_t$ and $\tilde{\mathbf{q}}_t$. Of course, when we run the algorithm, we use $\mathbf{q}_t$ to randomly pick a black-box algorithm, since an algorithm using $\tilde{\mathbf{q}}_t$ can also pick an inactive black-box algorithm that does not compute a probability vector. But for the analysis of the algorithm we prefer to use $\tilde{\mathbf{q}}_t$, as this vector is computed using the regular Exponential Weights algorithm without conditioning, which enables us to use results from Chapter \ref{sec:4}.

\subsection{Surrogate Regret Analysis}\label{subsec:SRAgt}
Our goal is to give a similar proof as in Chapter \ref{sec:4} to bound the regular regret $R_I^{\mathcal{K}}$ on any contiguous interval $I$. We will therefore present a bound on the surrogate regret of the learner compared to the probability distribution $Q$ on $(\eta,k)$, after which we use an adjusted version of Lemma \ref{lem:surreg} to then complete the proof.\par
Consider an interval $I=[I_1,I_2]\subseteq\{1,2,\dots,T\}$. Recall from Chapter \ref{sec:3} that this interval can be partitioned by a set $\{J^{(i)}\in\mathcal{J}:-c\leq i \leq d\}$ of consecutive and disjoint geometric covering intervals, which successively double and then successively halve in length. We used black-box algorithms $b_{J^{(i)}}$ which solely operated on those intervals $J^{(i)}=[J_1^{(i)},J_2^{(i)}]$ and not outside of them to decompose the regret. We will do the same for the surrogate regret of the learner. First, we define this surrogate regret similar to Definition \ref{def:surreg}.
\[S_I^Q\big(\mathcal{M}(\mathcal{B})\big) := \sum_{t=I_1}^{I_2}\Big(L\big(\hat{f}_t,P_t^{\mathcal{M}(\mathcal{B})}\big)-\mathbb{E}_Q\big[\hat{f}_t(\eta,k)\big]\Big)\]
The first term in the summation is actually equal to the loss of the learner:
\begin{align*}
\hat{g}_t & = -\ln\mathbb{E}_{\mathbf{q}_t}\big[e^{-g_t(b)}\big] \\
 & = -\ln\mathbb{E}_{\mathbf{q}_t}\big[e^{\ln\mathbb{E}_{P_t^b}[e^{-\hat{f}_t(\eta,k)}]}\big] \\
 & = -\ln\mathbb{E}_{\mathbf{q}_t}\big[\mathbb{E}_{P_t^b}[e^{-\hat{f}_t(\eta,k)}]\big] \\
 & = -\ln\mathbb{E}_{P_t^{\mathcal{M}(\mathcal{B})}}\big[e^{-\hat{f}_t(\eta,k)}\big] \\
 & = L\big(\hat{f}_t,P_t^{\mathcal{M}(\mathcal{B})}\big)
\end{align*}
Using this equality, we now decompose the surrogate regret.
\begin{align*}
S_I^Q\big(\mathcal{M}(\mathcal{B})\big) & = \sum_{t=I_1}^{I_2}\Big(\hat{g}_t-\mathbb{E}_Q[\hat{f}_t(\eta,k)]\Big) \\
 & = \sum_{i=-c}^d\sum_{t\in J^{(i)}}\Big(\hat{g}_t-g_t(b_{J^{(i)}}) + g_t(b_{J^{(i)}}) - \mathbb{E}_Q[\hat{f}_t(\eta,k)]\Big) \\
 & = \sum_{i=-c}^d\bigg(\sum_{t\in J^{(i)}}\Big(\hat{g}_t-g_t(b_{J^{(i)}})\Big) + \sum_{t\in J^{(i)}}\Big(g_t(b_{J^{(i)}}) - \mathbb{E}_Q[\hat{f}_t(\eta,k)]\Big)\bigg) \\
 & = \sum_{i=-c}^d\Big( S_{J^{(i)}}^{b_{J^{(i)}}}(\mathcal{M}) + S_{J^{(i)}}^Q(b_{J^{(i)}})\Big)
\end{align*}
Here $S_{J^{(i)}}^{b_{J^{(i)}}}(\mathcal{M})$ is the surrogate regret of the meta algorithm compared to black-box $b_{J^{(i)}}$ on the interval $J^{(i)}$  and $S_{J^{(i)}}^Q(b_{J^{(i)}})$ is the surrogate regret of black-box $b_{J^{(i)}}$ compared to the distribution $Q$. We will now analyze them both.

\subsubsection{Meta Surrogate Regret}\label{subsec:SRm}
The meta surrogate regret is defined as
\[S_{J^{(i)}}^{b_{J^{(i)}}}(\mathcal{M}) := \sum_{t\in J^{(i)}}\Big(\hat{g}_t-g_t(b_{J^{(i)}})\Big).\]
Since $b_{J^{(i)}}$ is active only on $J^{(i)}$ and the learner's loss and the loss of the black-box are equal when the black-box algorithm is not active, we can also write this as follows:
\begin{align*}
S_{J^{(i)}}^{b_{J^{(i)}}}(\mathcal{M}) & = \sum_{t=J_1^{(i)}}^{J_2^{(i)}}\Big(\hat{g}_t-g_t(b_{J^{(i)}})\Big) \\
 & = \sum_{t=1}^T\Big(\hat{g}_t-g_t(b_{J^{(i)}})\Big) \\
 & = S_T^{b_{J^{(i)}}}(\mathcal{M})
 \end{align*}
Now we can apply Lemma \ref{lem:Sreg} where we substitute $\Theta=\mathcal{B}$, $\rho(\theta)=\tau(b)$, $g_t(\theta)=g_t(b)$, $Q=\delta_{b_{J^{(i)}}}$ and $P_t=\tilde{\mathbf{q}}_t$ with $\tau(b)$ the prior on $\mathcal{B}$ and $\delta_b$ the distribution that puts all mass on $b$. Note that we are allowed to use Lemma \ref{lem:Sreg}, since $\tilde{\mathbf{q}}_t$ is determined using Exponential Weights without conditioning. This then gives
\begin{align*}
S_{J^{(i)}}^{b_{J^{(i)}}}(\mathcal{M}) & \leq \text{KL}(\delta_{b_{J^{(i)}}}||\tau) \\
 & = \mathbb{E}_{\delta_{b_{J^{(i)}}}}\ln\Big(\frac{\delta_{b_{J^{(i)}}}(b)}{\tau(b)}\Big) \\
 & = -\ln\tau(b_{J^{(i)}}).
\end{align*}
So we bounded the meta surrogate regret by the negative logarithm of the weight the prior $\tau$ puts on black-box $b_J^{(i)}$. As mentioned in Theorem \ref{thm:Rchange}, we choose $\tau$ to be uniform on $\mathcal{B}$. This yields
\[S_{J^{(i)}}^{b_{J^{(i)}}}(\mathcal{M}) \leq \ln|\mathcal{B}|.\]

\subsubsection{Black-Box Surrogate Regret}\label{subsec:SRb}
Next, we look at the black-box surrogate regret:
\[S_{J^{(i)}}^Q(b_{J^{(i)}}) := \sum_{t\in J^{(i)}}\Big(g_t(b_{J^{(i)}}) - \mathbb{E}_Q[\hat{f}_t(\eta,k)]\Big)\]
This expression is similar to the surrogate regret from Chapter \ref{sec:4}, except that it only covers the interval $J^{(i)}$ instead of all time $[1,T]$. However, since the black-box $b_J^{(i)}$ only operates on $J^{(i)}$ and hence starts running at $t=J_1^{(i)}$, this surrogate regret is the same as the surrogate regret in Chapter \ref{sec:4}, but with $T=|J^{(i)}|$. This enables us to again use Lemma \ref{lem:Sreg}. We then obtain the bound
\[S_{J^{(i)}}^Q(b_{J^{(i)}}) \leq \text{KL}(Q||\gamma\times\pi).\]
Remember that $T$ is known to the learner and can be used by the algorithm, but $I$ and hence the $J^{(i)}$ are unknown and cannot be used. So just like in Chapter \ref{sec:4}, we now set $Q=\delta_{\ddot{\eta}}\times\pi(\cdot|\mathcal{K})$ for some $\ddot{\eta}\in\Gamma=\big\{\frac{1}{2}\big\}\cup\big\{2^{-i}:i=1,2,\dots,\lceil ^2\log{\sqrt{T}}\rceil\big\}$. Moreover, for each black-box $\gamma$ is again the uniform distribution on $\Gamma$ and $\pi$ is again arbitrary. We now find
\[S_{J^{(i)}}^Q(b_{J^{(i)}}) \leq \ln\lceil^2\log\sqrt{T}\rceil-\ln\pi(\mathcal{K}),\]
just like we concluded in Chapter \ref{sec:4}.

\subsubsection{Combined Surrogate Regret}
Recall that in Section \ref{subsec:CBCESquint} we derived that the amount of geometric covering intervals, which form the partitioning of $I$, is bounded by $2\cdot {}^2\log(|I|+2)$. Hence, we can now conclude that the surrogate regret of the learner compared to the distribution $Q$ is bounded by
\begin{align*}
S_I^Q\big(\mathcal{M}(\mathcal{B})\big) & = \sum_{i=-c}^d\Big( S_{J^{(i)}}^{b_{J^{(i)}}}(\mathcal{M}) + S_{J^{(i)}}^Q(b_{J^{(i)}})\Big) \\
 & \leq 2\cdot {}^2\log\big(|I|+2\big) \cdot \big(\ln|\mathcal{B}|+\ln\lceil^2\log\sqrt{T}\rceil-\ln\pi(\mathcal{K})\big)\\
 & \leq \hat{A}_I^{\mathcal{K}}
\end{align*}
where we define
\begin{equation}\label{eq:AIK}
\hat{A}_I^{\mathcal{K}} := \Big(2\cdot {}^2\log\big(|I|+2\big) \cdot \big(\ln|\mathcal{B}|+\ln\lceil^2\log\sqrt{T}\rceil-\ln\pi(\mathcal{K})\big)\Big) \vee 1.
\end{equation}

\subsection{Regular Regret Analysis}\label{subsec:RRA}
Now that we have bounded the surrogate regret of the learner on any interval $I$, we aim to do the same for the regular regret. For this, we introduce a new lemma which is very similar to Lemma \ref{lem:surreg}. Here, we again use the notation $R_I^k\big(\mathcal{M}(\mathcal{B})\big)$ and $V_I^k\big(\mathcal{M}(\mathcal{B})\big)$ for the total regret and variance of the meta algorithm with its black-box algorithms (i.e.\ the learner) on the interval $I$, like introduced in Section \ref{sec:meta-alg}.
\begin{lemma}\label{lem:surregI}
Use any algorithm to determine $P_t^{\mathcal{M}(\mathcal{B})}$. Then for every probability distribution $Q(\eta,k)$ and contiguous interval $I$:
\[\mathbb{E}_Q\big[\eta R_I^k\big(\mathcal{M}(\mathcal{B})\big)\big] \leq \mathbb{E}_Q\big[\eta^2 V_I^k\big(\mathcal{M}(\mathcal{B})\big)\big] + S_I^Q\big(\mathcal{M}(\mathcal{B})\big)\]
\end{lemma}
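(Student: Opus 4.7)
The plan is to mimic the proof of Lemma~\ref{lem:surreg} almost verbatim, replacing the single distribution $P_t$ with the mixture $P_t^{\mathcal{M}(\mathcal{B})} = \mathbb{E}_{\mathbf{q}_t}[P_t^b]$ that the learner actually plays. The point is that the cancellation argument in the original lemma only used one structural fact, namely that $\mathbf{w}_t$ is obtained from the distribution by marginalising out $\eta$ via $\mathbf{w}_t = \mathbb{E}_{P_t}[\eta\mathbf{e}_k]/\mathbb{E}_{P_t}[\eta]$, and by construction $\mathbf{w}_t$ is related to $P_t^{\mathcal{M}(\mathcal{B})}$ in exactly the same way.

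First I would define, for each $t\in I$, the per-round mix loss
\[m_t := L\big(\hat{f}_t,P_t^{\mathcal{M}(\mathcal{B})}\big) = -\ln\mathbb{E}_{P_t^{\mathcal{M}(\mathcal{B})}}\bigl[e^{-\hat{f}_t(\eta,k)}\bigr].\]
Writing $\hat{f}_t(\eta,k)=-\eta r_t^k+\eta^2(r_t^k)^2$ and using $e^{x-x^2}\le 1+x$ for $x\ge-\tfrac12$ (valid since $\eta r_t^k\in[-\tfrac12,\tfrac12]$) I would obtain
\[e^{-m_t}\le 1+\mathbb{E}_{P_t^{\mathcal{M}(\mathcal{B})}}[\eta r_t^k].\]
Then the key step: substitute $r_t^k = \mathbf{w}_t^{\top}\mathbf{l}_t-l_t^k$ and plug in $\mathbf{w}_t=\mathbb{E}_{P_t^{\mathcal{M}(\mathcal{B})}}[\eta\mathbf{e}_k]/\mathbb{E}_{P_t^{\mathcal{M}(\mathcal{B})}}[\eta]$. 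The two terms cancel exactly as in the proof of Lemma~\ref{lem:surreg}, giving $\mathbb{E}_{P_t^{\mathcal{M}(\mathcal{B})}}[\eta r_t^k]=0$ and hence $m_t\ge 0$.

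Summing the non-negative quantities $m_t$ over $t\in I$ and using the definition of the surrogate regret on the interval,
\[0\le \sum_{t\in I}m_t = S_I^Q\bigl(\mathcal{M}(\mathcal{B})\bigr) + \mathbb{E}_Q\Big[\sum_{t\in I}\hat{f}_t(\eta,k)\Big],\]
I would then recognise $\sum_{t\in I}\hat{f}_t(\eta,k) = -\eta R_I^k\bigl(\mathcal{M}(\mathcal{B})\bigr)+\eta^2 V_I^k\bigl(\mathcal{M}(\mathcal{B})\bigr)$ by the definitions of $R_I^k$ and $V_I^k$, and rearrange to obtain the claimed inequality.

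There is essentially no obstacle: the only thing worth double-checking is that the marginalisation identity $\mathbf{w}_t=\mathbb{E}_{P_t^{\mathcal{M}(\mathcal{B})}}[\eta\mathbf{e}_k]/\mathbb{E}_{P_t^{\mathcal{M}(\mathcal{B})}}[\eta]$ still holds for the mixture played by Squint-CE, but this was verified in the derivation of the algorithm right after the definition $P_t^{\mathcal{M}(\mathcal{B})}=\mathbb{E}_{\mathbf{q}_t}[P_t^b]$. Note also that the statement does not require any specific choice of $\mathbf{q}_t$ or of the black-box distributions $P_t^b$; the inequality holds for any algorithm determining $P_t^{\mathcal{M}(\mathcal{B})}$, which is consistent with the lemma's wording.
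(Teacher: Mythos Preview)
Your proposal is correct and follows essentially the same approach as the paper. The paper defines $\hat{g}_t = L(\hat{f}_t,P_t^{\mathcal{M}(\mathcal{B})})$ (your $m_t$), invokes the argument of Lemma~\ref{lem:surreg} verbatim to get $\hat{g}_t\ge 0$, and then sums over $t\in I$ and rearranges exactly as you describe; you simply spell out the cancellation step in more detail than the paper does.
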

\begin{proof}[\textbf{Proof of Lemma \ref{lem:surregI}}]
As shown in Section \ref{subsec:SRAgt}, the loss of the learner equals
\begin{align*}
\hat{g}_t & = L\big(\hat{f}_t,P_t^{\mathcal{M}(\mathcal{B})}\big) \\
 & = -\ln\mathbb{E}_{P_t^{\mathcal{M}(\mathcal{B})}}\big[e^{-\hat{f}_t(\eta,k)}\big] \\
 & = -\ln\mathbb{E}_{P_t^{\mathcal{M}(\mathcal{B})}}\big[e^{\eta r_t^k - \eta^2(r_t^k)^2}\big].
\end{align*}
Using the same arguments as in Lemma \ref{lem:surreg}, which remain valid even though we now use $P_t^{\mathcal{M}(\mathcal{B})}$ instead of $P_t$, we find that $\hat{g}_t\geq0$ for all $t$. The definition of $S_I^Q\big(\mathcal{M}(\mathcal{B})\big)$ then yields
\begin{align*}
0 & \leq \sum_{t=I_1}^{I_2}\hat{g}_t \\
 & = S_I^Q\big(\mathcal{M}(\mathcal{B})\big) + \sum_{t=I_1}^{I_2}\mathbb{E}_Q[\hat{f}_t(\eta,k)] \\
 & = S_I^Q\big(\mathcal{M}(\mathcal{B})\big) + \mathbb{E}_Q\big[-\eta R_I^k\big(\mathcal{M}(\mathcal{B})\big) + \eta^2 V_I^k\big(\mathcal{M}(\mathcal{B})\big)\big],
\end{align*}
which implies the desired inequality.
\end{proof}\noindent
For the sake of simplicity, we will use the notation $R_I^{\mathcal{K}}:=R_I^{\mathcal{K}}\big(\mathcal{M}(\mathcal{B})\big)$, $V_I^{\mathcal{K}}:=V_I^{\mathcal{K}}\big(\mathcal{M}(\mathcal{B})\big)$ and $S_I^Q:=S_I^Q\big(\mathcal{M}(\mathcal{B})\big)$ for the remainder of this chapter. We now complete the proof of Theorem \ref{thm:Rchange}.
\begin{proof}[\textbf{Proof of Theorem \ref{thm:Rchange}}]
As mentioned in Section \ref{subsec:SRb}, we again set $Q=\delta_{\ddot{\eta}}\times\pi(\cdot|\mathcal{K})$ for some $\ddot{\eta}\in\Gamma=\big\{\frac{1}{2}\big\}\cup\big\{2^{-i}:i=1,2,\dots,\lceil ^2\log{\sqrt{T}}\rceil\big\}$, let $\gamma(\eta)$ be the uniform distribution on $\Gamma$ and let $\pi(k)$ be arbitrary. Finally, let $\mathcal{B}=\{b_J:J=[J_1,J_2]\in\mathcal{J}\text{ with }J_2\leq T\}$ be the set of black-box algorithms running only on a geometric covering interval with endpoint not larger than $T$ and let $\tau(b)$ be the uniform distribution on $\mathcal{B}$. Using the earlier obtained bound on the surrogate regret, Lemma \ref{lem:surregI} translates to
\begin{align*}
R_I^{\mathcal{K}} & \leq \ddot{\eta}V_I^{\mathcal{K}} + \frac{1}{\ddot{\eta}}S_I^Q \\
 & \leq \ddot{\eta}V_I^{\mathcal{K}} + \frac{1}{\ddot{\eta}}\hat{A}_I^{\mathcal{K}}
\end{align*}
with $\hat{A}_I^{\mathcal{K}}$ defined as in (\ref{eq:AIK}). We now apply the exact same optimization steps for $\ddot{\eta}$ as in the proof of Theorem \ref{thm:Rnochange} to obtain
\begin{align*}
R_I^{\mathcal{K}} & \leq 2\sqrt{2V_I^{\mathcal{K}}\hat{A}_I^{\mathcal{K}}}+4\hat{A}_I^{\mathcal{K}}. \\
\end{align*}
Lastly, we want to gain insight in the size of the term $\ln|\mathcal{B}|$ in the expression of $\hat{A}_I^{\mathcal{K}}$. To determine this, we return to the visualization of the geometric covering intervals, given in Figure \ref{fig:JVis} from Chapter \ref{sec:3}, which we display again here.
\noindent
\FloatBarrier
\begin{figure}[h]
    \renewcommand\thefigure{\ref{fig:JVis}}
    \centering
    \includegraphics[scale=0.4]{JVisualisatieKnip.png}
    \caption{Geometric covering intervals. Each interval is denoted by [ ]. Adopted from \cite{ChEnv}.}
\end{figure}
\FloatBarrier
\noindent
We only look at geometric covering intervals with endpoint $J_2\leq T$. Then there are $T$ intervals of length 1, $\lfloor\frac{T-1}{2}\rfloor$ intervals of length 2, $\lfloor\frac{T-3}{4}\rfloor$ intervals of length 4 and so on. For each interval there exists an $i\in\mathbb{N}\cup\{0\}$ such that the interval length equals $2^i$. Since the first interval of the set of intervals with length $2^i$ has starting point $2^i$ and endpoint $2^{i+1}-1$, we find that $2^{i+1}-1\leq T$. This leads to the conclusion that the length $2^i$ of every interval, used by black-boxes from $\mathcal{B}$, is bounded by $2^i\leq\frac{T+1}{2}$ and thus $i\leq\lfloor {}^2\log\frac{T+1}{2}\rfloor$. This gives
\begin{align*}
|\mathcal{B}| & = \sum_{i=0}^{\lfloor ^2\log \frac{T+1}{2} \rfloor}\Big\lfloor\frac{T-2^i+1}{2^i}\Big\rfloor \\
 & = \sum_{i=0}^{\lfloor ^2\log \frac{T+1}{2} \rfloor}\Big\lfloor\frac{T+1}{2^i}-1\Big\rfloor \\
 & \leq \sum_{i=0}^{\lfloor ^2\log \frac{T+1}{2} \rfloor}\frac{T+1}{2^i}-\Big\lfloor {}^2\log \frac{T+1}{2} \Big\rfloor-1 \\
 & = (T+1)\cdot\frac{1-\frac{1}{2}^{\lfloor ^2\log \frac{T+1}{2}\rfloor+1}}{1-\frac{1}{2}}-\Big\lfloor {}^2\log \frac{T+1}{2} \Big\rfloor-1 \\
 & = (T+1)\cdot(2-2^{-\lfloor ^2\log \frac{T+1}{2} \rfloor})-\Big\lfloor {}^2\log \frac{T+1}{2} \Big\rfloor-1 \\
 & \leq (T+1)\cdot\Big(2- \frac{2}{T+1}\Big) - \Big\lfloor{}^2\log\frac{T+1}{2}\Big\rfloor-1 \\
 & = 2T-\Big\lfloor{}^2\log\frac{T+1}{2}\Big\rfloor-1 \\
 & \leq 2T
\end{align*}
which completes the proof of Theorem \ref{thm:Rchange}.
\end{proof}\noindent

\subsection{Improving the Bound}\label{subsec:newtau}
We finish this chapter by proposing another improvement of the bound, which changes the term $\ln T$. While deriving (\ref{eq:RIKbbound}) we chose the prior $\tau(b)$ to be equal to the uniform distribution on $\mathcal{B}$. Choosing a different prior could give a better bound, one possibility being the prior Jun et al.\ \cite{ChEnv} use for their meta algorithm CBCE, which we discussed in Chapter \ref{sec:3}:
\begin{equation}\label{eq:taujun}
\tau(b_J) = Z^{-1}\cdot\frac{1}{J_1^2\big(1+\lfloor ^2\log{J_1}\rfloor\big)} \text{ for all } b_J\in\mathcal{B}
\end{equation}
with $Z$ a normalization factor, $J_1$ the starting point of the interval $J$ and $\mathcal{B}$ the set of black-box algorithms operating only on a geometric covering interval with endpoint not larger than $T$. The following theorem provides the bound when using this prior.
\begin{theorem}
\label{thm:Rchangebonus}
Let $\mathcal{B} = \{b_J:J=[J_1,J_2]\in\mathcal{J}\text{ with }J_2\leq T\}$ be the set of black-box algorithms operating solely on a geometric covering interval with endpoint not larger than $T$. Furthermore, let $\tau(b)$ be as defined in (\ref{eq:taujun}), let $\gamma(\eta)$ be the uniform distribution on $\Gamma=\big\{\frac{1}{2}\big\}\cup\big\{2^{-i}:i=1,2,\dots,\lceil ^2\log{\sqrt{T}}\rceil\big\}$ and let $\pi(k)$ be arbitrary. Then for any contiguous interval $I=[I_1,I_2]\subseteq\{1,2,\dots,T\}$, Algorithm \ref{alg:SCE} yields
\[R_I^{\mathcal{K}}\leq2\sqrt{2V_I^{\mathcal{K}}\tilde{A}_I^{\mathcal{K}}}+4\tilde{A}_I^{\mathcal{K}}\]
with
\[\tilde{A}_I^{\mathcal{K}} := \bigg(2\cdot {}^2\log\big(|I|+2\big) \cdot \Big(\frac{1}{2}+3\ln{I_2}+\ln\lceil^2\log\sqrt{T}\rceil-\ln\pi(\mathcal{K})\Big)\bigg) \vee 1\]
where $x \vee y := \max\{x,y\}$.
\end{theorem}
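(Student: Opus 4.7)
The plan is to follow the same structure as the proof of Theorem \ref{thm:Rchange}, changing only the bound on the meta surrogate regret $S_{J^{(i)}}^{b_{J^{(i)}}}(\mathcal{M})$, since the prior $\tau(b)$ is the only ingredient that has changed. Everything else — the reduction to the surrogate task, the decomposition of $S_I^Q(\mathcal{M}(\mathcal{B}))$ into meta and black-box surrogate regrets, the bound on the black-box surrogate regret $\ln\lceil{}^2\log\sqrt T\rceil-\ln\pi(\mathcal K)$, the count $c+d+1 \leq 2\cdot{}^2\log(|I|+2)$ of geometric covering intervals, and the application of Lemma \ref{lem:surregI} together with the three-case optimization over $\ddot\eta\in\Gamma$ — carries over verbatim.

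The key computation is thus to bound $-\ln\tau(b_{J^{(i)}})$ uniformly in $J^{(i)}\subseteq I$. First I would control the normalization constant $Z$. Grouping the defining sum by starting points $J_1\in\{1,\dots,T\}$, for each $J_1$ the geometric covering intervals starting at $J_1$ correspond to the $n\in\mathbb N\cup\{0\}$ such that $2^n$ divides $J_1$, so there are exactly $1+v_2(J_1)$ of them, where $v_2$ is the $2$-adic valuation. Since $v_2(J_1)\leq \lfloor{}^2\log J_1\rfloor$, the factor $1+\lfloor{}^2\log J_1\rfloor$ in the prior dominates this count, and
\[
Z \;=\; \sum_{J_1=1}^{T}\frac{1+v_2(J_1)}{J_1^{2}\bigl(1+\lfloor{}^2\log J_1\rfloor\bigr)} \;\leq\; \sum_{J_1=1}^{\infty}\frac{1}{J_1^{2}} \;=\; \frac{\pi^{2}}{6},
\]
so $\ln Z \leq \ln(\pi^2/6) \leq \tfrac{1}{2}$. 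Next, I would use the elementary inequality $1+\lfloor{}^2\log J_1\rfloor\leq J_1$ (valid for all $J_1\geq 1$, checked directly at $J_1=1,2$ and by comparing derivatives for $J_1\geq 2$) to conclude $\ln(1+\lfloor{}^2\log J_1\rfloor)\leq \ln J_1$. For any $b_{J^{(i)}}\in\mathcal{B}$ with $J^{(i)}\subseteq I$ we have $J_1^{(i)}\leq I_2$, so
\[
-\ln\tau(b_{J^{(i)}}) \;=\; \ln Z + 2\ln J_1^{(i)} + \ln\bigl(1+\lfloor{}^2\log J_1^{(i)}\rfloor\bigr) \;\leq\; \tfrac{1}{2} + 3\ln I_2.
\]

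With this substituted in place of $\ln|\mathcal{B}|$ in the analysis of Section \ref{subsec:SRm}, the combined surrogate regret bound becomes
\[
S_I^Q\bigl(\mathcal{M}(\mathcal{B})\bigr) \;\leq\; 2\cdot{}^2\log(|I|+2)\cdot\Bigl(\tfrac12 + 3\ln I_2 + \ln\lceil{}^2\log\sqrt T\rceil - \ln\pi(\mathcal K)\Bigr) \;\leq\; \tilde A_I^{\mathcal K}.
\]
Feeding this into Lemma \ref{lem:surregI} with $Q=\delta_{\ddot\eta}\times\pi(\cdot\mid\mathcal K)$ yields $R_I^{\mathcal K} \leq \ddot\eta V_I^{\mathcal K} + \tfrac{1}{\ddot\eta}\tilde A_I^{\mathcal K}$, and the three-case optimization over $\ddot\eta\in\Gamma$ reproduced from the proof of Theorem \ref{thm:Rnochange}/\ref{thm:Rchange} delivers $R_I^{\mathcal K}\leq 2\sqrt{2V_I^{\mathcal K}\tilde A_I^{\mathcal K}} + 4\tilde A_I^{\mathcal K}$.

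The main obstacle I expect is not any one step, but getting the bookkeeping right in the bound on $Z$: one must correctly count that each starting point $J_1$ hosts exactly $1+v_2(J_1)$ geometric covering intervals and then match this against the $1+\lfloor{}^2\log J_1\rfloor$ factor in the denominator of $\tau$ to collapse the sum to $\sum 1/J_1^2$. The subsequent bound $1+\lfloor{}^2\log J_1\rfloor\leq J_1$ is the small trick that lets the unsightly $\ln\ln$ term be absorbed into a clean $3\ln I_2$, giving the constants stated in $\tilde A_I^{\mathcal K}$.
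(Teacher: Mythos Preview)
Your proposal is correct and follows essentially the same approach as the paper's proof: both reduce to bounding the meta surrogate regret via $-\ln\tau(b_{J^{(i)}})$, control the normalization $Z$ by $\pi^2/6$, use $1+\lfloor{}^2\log J_1\rfloor\leq J_1$ to get $\tfrac{1}{2}+3\ln I_2$, and then carry the rest of Theorem~\ref{thm:Rchange} over unchanged.

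The one difference worth noting is in how you count intervals starting at $J_1$. The paper argues that at most $1+\lfloor{}^2\log J_1\rfloor$ intervals can start at $J_1$ because that is the number of intervals \emph{active} at $J_1$. You instead use the sharper observation that the intervals in $\mathcal{J}$ starting at $J_1$ correspond to the $n$ with $2^n\mid J_1$, giving $1+v_2(J_1)\leq 1+\lfloor{}^2\log J_1\rfloor$. Both collapse the sum to $\sum 1/J_1^2$. One small slip: your displayed equality $Z=\sum_{J_1=1}^{T}\frac{1+v_2(J_1)}{J_1^{2}(1+\lfloor{}^2\log J_1\rfloor)}$ should be an inequality, since $\mathcal{B}$ also imposes $J_2\leq T$ and so may contain fewer than $1+v_2(J_1)$ intervals starting at a given $J_1$; this does not affect the argument because you only need an upper bound.
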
\noindent
Theorem \ref{thm:Rchangebonus} results in the bound
\begin{equation}\label{eq:RIKbboundImpr}
\begin{aligned}
R_I^{\mathcal{K}} & \preccurlyeq \sqrt{\ln|I|\cdot V_I^{\mathcal{K}}\big(\ln I_2 + \ln(\ln T)-\ln\pi(\mathcal{K})\big)} \\
 & \qquad + \ln|I|\cdot \big(\ln I_2 + \ln(\ln T)-\ln\pi(\mathcal{K})\big)
\end{aligned}
\end{equation}
which is a small improvement on the previously obtained bound (\ref{eq:RIKbbound}) as the term $\ln{T}$ is replaced by $\ln{I_2}$ and the inequality $I_2\leq T$ holds.
\begin{proof}[\textbf{Proof of Theorem \ref{thm:Rchangebonus}}]
The proof follows the exact same steps as the proof of Theorem \ref{thm:Rchange}. The only difference is the bound of the meta surrogate regret. Remember that we only use black-box algorithms, which run solely on geometric covering intervals. In Chapter \ref{sec:3} we concluded that at time $t$ there are $1+\lfloor ^2\log{t}\rfloor$ active intervals, which implies that there are at most $1+\lfloor ^2\log{t}\rfloor$ intervals with starting point $t$. This allows us to bound the normalization factor $Z$:
\begin{align*}
Z & = \sum_{b_J\in\mathcal{B}}\frac{1}{J_1^2\big(1+\lfloor ^2\log{J_1}\rfloor\big)} \\
 & \leq \sum_{t=1}^{\infty}\frac{1+\lfloor ^2\log{t}\rfloor}{t^2\big(1+\lfloor ^2\log{t}\rfloor\big)} \\
 & = \sum_{t=1}^{\infty}\frac{1}{t^2} \\
 & = \frac{\pi^2}{6}
\end{align*}
This gives
\[\tau(b_J) \geq \frac{6}{\pi^2}\cdot\frac{1}{J_1^2\big(1+\lfloor ^2\log{J_1}\rfloor\big)}.\]
Using the bound on the meta surrogate regret of Section \ref{subsec:SRm}, we obtain
\begin{align*}
S_{J^{(i)}}^{b_{J^{(i)}}}(\mathcal{M}) & \leq -\ln\tau(b_{J^{(i)}}) \\
 & \leq \ln\Big(\frac{\pi^2}{6}\Big) + \ln\Big(\big(J_1^{(i)}\big)^2\cdot\big(1+\lfloor{}^2\log{J_1^{(i)}}\rfloor\big)\Big) \\
 & \leq \frac{1}{2} + 3\ln(I_2).
\end{align*}
The last inequality follows from the fact that all $J^{(i)}$ are subsets of $I$, which gives $J_1^{(i)}\leq I_2$ with $I_2$ the endpoint of $I$. Moreover, the inequality $1+\lfloor{}^2\log t \rfloor \leq t$ holds for all $t\geq 1$, since $1+{}^2\log 1 = 1$, $1+{}^2\log 2 = 2$ and $1+{}^2\log t$ grows slower than $t$ for $t\geq 2$. This follows from the derivative of $1+{}^2\log t$, which equals $\frac{1}{t\ln 2}$ and is smaller than $1$, the derivative of $t$, for all $t\geq2$. So now, using this newly obtained bound on the meta surrogate regret, we can replace the term $\ln|\mathcal{B}|$ by $\frac{1}{2}+3\ln(I_2)$ in every expression. All the other steps for obtaining the bound on the regular regret stay the same. This concludes the proof.
\end{proof}\noindent
As mentioned before, using this new prior yields a small improvement of the bound. Possibly, one can find a prior which gives an even better bound, but we leave this for future research. In conclusion, we have now developed an algorithm which maintains Squint's advantages, mentioned in Section \ref{subsec:2.3}, in a changing environment and is an improvement on the CBCE + Squint algorithm from Chapter \ref{sec:3}.

\newpage
\section{Conclusion}\label{sec:6}
In this thesis we studied the prediction with expert advice setting. We first looked at a non-changing environment, i.e.\ where the best expert does not change over time. A well-known algorithm, named Hedge, yielded the regret bound
\[R_T^k \preccurlyeq \sqrt{T\ln{K}}\]
while Squint obtained the bound
\[R_T^{\mathcal{K}} \preccurlyeq \sqrt{V_T^{\mathcal{K}}\big(\ln(\ln{T})-\ln{\pi(\mathcal{K})}\big)} + \ln(\ln{T})-\ln{\pi(\mathcal{K})}.\]
We concluded that Squint had an advantage over Hedge, because
\begin{itemize}
\item the Squint regret bound contains the variance $V_T^{\mathcal{K}}$ instead of the time $T$
\item the Squint regret bound uses $\frac{1}{\pi(\mathcal{K})}$, the inverse of the prior weight on a set of good experts $\mathcal{K}$, while the Hedge bound contains $K$, the total number of experts
\end{itemize}
Both factors for Squint are never larger than those for Hedge when one chooses the prior $\pi(k)$ on the experts to be uniform. In specific cases, the values of those factors are even much lower.\par
When moving to a changing environment, we want to bound the regret on every interval and not only from $t=1$ to $T$. The common solution to obtain a good regret bound on any interval $I=[I_1,I_2]$ in a changing environment is to use the original algorithm on different intervals. These algorithms on different intervals then are called black-box algorithms. One then applies a meta algorithm on these black-boxes to learn the best intervals and hence which black-box algorithms to follow. However, when we applied a meta algorithm, called Coin Betting for Changing Environment, the advantage Squint had over Hedge vanished. Applying CBCE to Hedge gave
\[R_I^k \preccurlyeq \sqrt{|I|\ln{I_2}}+\sqrt{|I|\ln{K}}\]
while doing the same for Squint gave
\begin{align*}
R_I^{\mathcal{K}} & \preccurlyeq \sqrt{|I|\ln{I_2}} + \sqrt{\ln|I|\cdot V_I^{\mathcal{K}}\big(\ln(\ln{|I|})-\ln\pi(\mathcal{K})\big)}\\
& \qquad + \ln|I|\cdot\big(\ln(\ln|I|)-\ln\pi(\mathcal{K})\big)
\end{align*}
where $|I|$ is the length of $I$ and $I_2$ is the endpoint of $I$. In both cases the overhead of CBCE to learn the best interval, $\sqrt{|I|\ln{I_2}}$, is the dominating term. This makes Squint's advantages in comparison to Hedge disappear. We wanted to find a way to retain Squint's properties, but applying a meta algorithm would not help. Hence, we looked into the construction of Squint and the proof of its regret bound in a non-changing environment. We saw that the creators of Squint used a reduction of the original task, for which Squint was made, to a surrogate task in order to prove the bound. They used the Exponential Weights algorithm for the surrogate task, which then yielded Squint for the original task. \par
We used this approach of making a reduction and combined it with the idea of using black-box algorithms with a meta algorithm in a changing environment. This way we created an algorithm for the surrogate task in a changing environment. We then obtained our own algorithm, named Squint-CE, for the original task in a changing environment. Its pseudocode is expressed by Algorithm \ref{alg:SCE}. We found that the regret bound of Squint-CE was significantly better than the one of CBCE applied to Squint:
\begin{align*}
R_I^{\mathcal{K}} & \preccurlyeq \sqrt{\ln|I|\cdot V_I^{\mathcal{K}}\big(\ln I_2 + \ln(\ln T)-\ln\pi(\mathcal{K})\big)} \\
 & \qquad + \ln|I|\cdot \big(\ln I_2 + \ln(\ln T)-\ln\pi(\mathcal{K})\big)
\end{align*}
We only had to pay a term $\ln I_2$ and a factor $\ln|I|$ with respect to the Squint bound in a non-changing environment, whereas for the combination of CBCE with Squint we obtained a dominating term $\sqrt{|I|\ln I_2}$. In conclusion, we managed to retain Squint's advantages in a changing environment while only having to pay a small price.

\newpage
\cleardoublepage
\phantomsection
\addcontentsline{toc}{section}{References}

\end{document}